\pdfoutput=1
\documentclass{article}
\usepackage{arxiv}  

\usepackage{amsmath,amssymb,amsthm}
\usepackage{graphicx}
\usepackage{booktabs}
\usepackage{algorithm}
\usepackage{algorithmic}
\usepackage{microtype}
\usepackage{xcolor}
\usepackage{tikz}
\usetikzlibrary{arrows.meta,positioning,patterns,calc}
\usepackage[font=small,labelfont=bf]{caption}
\usepackage{natbib}
\newcommand{\kwta}{$k$-WTA}
\usepackage[colorlinks=true,linkcolor=blue!60!black,citecolor=blue!60!black,urlcolor=blue!60!black]{hyperref}

\newtheorem{proposition}{Proposition}
\newtheorem{corollary}{Corollary}
\newtheorem{definition}{Definition}
\newtheorem{remark}{Remark}

\title{Replay in the Silent Degrees of Freedom: Continual Learning Without an Offline Phase}
\renewcommand{\shorttitle}{Replay in the Silent Degrees of Freedom}
\renewcommand{\undertitle}{Preprint}
\renewcommand{\headeright}{Preprint}

\author{%
  Zhang Yanhai \\
  zh0010ai@e.ntu.edu.sg \\
  Nanyang Technological University, Singapore}
\date{\today}

\begin{document}
\maketitle

\begin{abstract}
Brains consolidate memories not only in sleep but also, as recent causal evidence shows,
through \emph{local sleep}: brief, use-dependent off-periods of individual cortical circuits
during wakefulness. Replay-based continual learning in artificial networks, by contrast,
almost always consolidates in a dedicated offline phase or by interleaving replayed samples
with the input stream. We ask whether a network trained by local, biologically constrained
learning rules can consolidate \emph{during inference}, with no offline phase at all. The
construction has three parts. An \emph{isolation} rule confines replay updates to hidden
synapses that are invisible to the current input under $k$-winner-take-all (\kwta) dynamics,
with optimiser state advanced only inside the mask. A \emph{refractory rotation} rule makes
units that have just fired sit out the next competition, which widens the set of consolidable
synapses. Two internal signals, a unit-level homeostatic pressure and a relative-novelty gate,
decide when replay bursts fire and when rotation runs. This inverts the usual direction of
non-interfering continual learning: rather than protecting past tasks while learning the
present one, the hidden computation on the current input is held invariant (exactly for silent and suppressed units when the mask matches the weights being updated, and for
all but $0.3\%$ of waking samples per update in the implemented form)
while past memories are written into the degrees of freedom the current batch leaves unused.
On class-incremental split-MNIST the system reaches $91.6\pm0.3\%$ with no offline phase, at
or above the best offline-night schedule on two held-out splits, tied with DER++, the
strongest interleaved-replay reference under backpropagation, and consistently above
experience replay (BP+ER), ER-ACE, A-GEM and unmasked local replay, at about twice the
night's replay samples; in a single pass over the stream it leads DER++ ($91.8\%$ against
$90.1\%$) and the night falls to $76.9\%$. Rotation carries most of the gain; isolation adds
the invariance guarantee and, at two-sample replay batches, avoids the severe failures that
unmasked replay shows in half the seeds. The advantage is largest at small episodic buffers
and gives way to the backpropagation references at large ones. On split CIFAR-10 the system
leads offline rehearsal and BP+ER but trails ER-ACE and DER++ by two to three points, and a
substrate decomposition on feature inputs attributes the gap to BP+ER to dense activation
rather than to credit assignment. The construction is not tied to the local learner: on a
backpropagation network with $k$-winner-take-all hidden layers under the same schedule,
refractory rotation adds half a point in five epochs and three in a single pass, and
isolation again costs nothing on top of it. Finally, a drive-referenced controller for the
rule's
synaptic decay replaces per-dataset tuning with a single ratio target and, with locally
learned skip synapses, holds a five-hidden-layer network at the two-layer level.
\end{abstract}
\keywords{continual learning \and replay \and local sleep \and biologically constrained learning \and $k$-winner-take-all}

\section{Problem Formulation}
\label{sec:problem}

\paragraph{Setting.}
Let a stream of labelled examples $(x_t, y_t) \in \mathbb{R}^{d}\times\{1,\dots,C\}$ be presented
in $T$ contiguous tasks $\mathcal{T}_1,\dots,\mathcal{T}_T$, where task $\mathcal{T}_\tau$
contains only classes $\mathcal{C}_\tau \subset \{1,\dots,C\}$, the $\mathcal{C}_\tau$ are
disjoint, and no task identity is available at test time (class-incremental learning). A network
$f_\theta$ with a single shared $C$-way readout is trained on the stream and evaluated on all $C$
classes after the final task,
\begin{equation}
A_{\mathrm{seq}}(\theta) \;=\; \Pr_{(x,y)\sim\mathcal{D}_{\mathrm{test}}}\!\big[\arg\max_c f_\theta(x)_c = y\big],
\qquad
F \;=\; \frac{1}{T-1}\sum_{\tau=1}^{T-1}\big(a_{\tau,\tau} - a_{T,\tau}\big),
\end{equation}
where $a_{t,\tau}$ is accuracy on task $\tau$ after learning task $t$ and $F$ is forgetting.

\paragraph{Constraints.}
We restrict the learner to a set $\mathcal{B}$ of cortical constraints: no weight transport
(feedback synapses are learned, never copied), errors carried by a bounded burst-like channel,
Dale's law, sparse population codes (\kwta{} on wide layers), sparse distance-dependent
connectivity, and a single forward--feedback sweep per example (no iterative relaxation). An
episodic memory (``hippocampus'') of at most $K$ raw examples with reservoir writing
\citep{vitter1985} is permitted;
$K$ is a budget, not a free parameter.

\paragraph{Objective.}
Rather than a single accuracy number, we optimise a four-way trade-off
\begin{equation}
\max_\theta \;\big(A_{\mathrm{seq}},\, A_{\mathrm{iid}},\, -R,\, -E\big)
\quad\text{subject to}\quad \theta \in \mathcal{B},
\label{eq:pareto}
\end{equation}
where $A_{\mathrm{iid}}$ is accuracy of the identical learner on the i.i.d.\ shuffle of the same
data (the \emph{static axis}: a brain-like mechanism must not damage ordinary learning), $R$ is
the replay cost in samples (synaptic work is reported separately), and $E$ an arithmetic-energy proxy
(Section~\ref{sec:setup}). The central question of this paper is whether the offline
consolidation phase can be removed entirely, so that consolidation occurs only \emph{during}
inference on the wake stream, without losing $A_{\mathrm{seq}}$, and at what cost in
$A_{\mathrm{iid}}$ and $R$.

\section{Related Work}
\label{sec:related}

\paragraph{Sleep-inspired consolidation in ANNs.}
The Bazhenov line implements sleep as a global offline phase: networks are converted to spiking
dynamics and driven by noise under local Hebbian plasticity, recovering old tasks after each new
one \citep{tadros2022}, after several tasks at once \citep{bazhenov2026}, in spiking networks with
coarse alternation of training and sleep \citep{golden2022}, and in equilibrium-propagation
networks combined with small rehearsal buffers \citep{kubo2025}. Wake--sleep frameworks with NREM
and REM stages \citep{sorrenti2024,robinson2022} and engineering systems such as SIESTA
\citep{harun2023} likewise schedule consolidation offline, and SESLR \citep{lin2026} appends a
noise-enhanced sleep phase, in which a spiking classifier on a frozen extractor trains only
on its replay buffer, to correct the recency bias left by online learning. In all of these,
sleep is global, offline, and clocked. None confines consolidation to currently unused units
during ongoing inference, and the bias correction such a phase delivers is what our plastic
readout receives continuously from isolated replay.

\paragraph{Gating, subnetwork and history-dependent competition methods.}
Context-dependent gating (XdG) activates a fixed random subnetwork per task but requires task
identity at train and test time \citep{masse2018}; learned gates \citep{tilley2023}, dendritic
context vectors from input prototypes \citep{iyer2022}, task-free sparsification by feedforward
and top-down errors \citep{lassig2023}, and gradient-recovered functional masks \citep{mckee2026}
remove the explicit label but derive the gate from the \emph{input identity}. Use-history-dependent
suppression of competition, in itself, has classical precedents: refractory winner-take-all
dynamics in physiological models of competitive learning \citep{kaski1994}, explicit refractory
competitive learning in which a recent winner is temporarily barred from winning \citep{maeda1999},
and, recently, spiking continual learners whose firing thresholds rise with each unit's activation
history \citep{shen2024} or whose top-$k$ selection is randomised \citep{shen2026} to diversify
recruitment across tasks, the wake-side counterpart of our random-alternation control. Our
rotation differs in
\emph{purpose}, not in primitive: recently wake-active units are benched for one competition
specifically to enlarge a dynamically reallocated \emph{replay channel}. Rotation creates
plastic degrees of freedom in which consolidation can proceed concurrently with inference,
rather than allocating representational resources across tasks.

\paragraph{Null-space and parameter-isolation methods.}
A mature line of work constructs non-interfering updates by protecting \emph{previous} tasks:
projecting new-task gradients orthogonally to old-task gradient or activation subspaces
\citep{farajtabar2020,saha2021}, training in the null space of accumulated feature covariance
\citep{wangnscl2021}, combining $k$-winner sparsity and heterogeneous dropout with such
projections \citep{abbasi2022}, or freezing pruned-and-frozen subnetworks
\citep{golkar2019,mallya2018}. Our construction inverts the protected direction: the
\emph{ongoing wake computation} is held invariant while \emph{old-memory replay} is written into
degrees of freedom that the current batch's sparse support exposes as instantaneously
invisible. No bases, SVDs, or task masks are stored, and the ``null space'' changes with every
batch at no cost. \citet{mckee2026} note that optimiser state and decoupled weight decay can
break structural guarantees even at zero gradient, and \citet{bricken2023} analyse stale
momentum in sparse learners; we build the corresponding requirement, mask-confined optimiser
state, into the exactness construction. Among interleaved-replay methods under
backpropagation, DER++ \citep{buzzega2020} distils the logits stored with each buffered
sample, ER-ACE \citep{caccia2022} restricts the incoming loss to the classes present in the
batch, and A-GEM \citep{chaudhry2019} projects the incoming gradient so that it does not
increase the loss on a buffer batch; all three serve as references below. Replay
\emph{scheduling} has also been studied as a learned policy \citep{klasson2023}; our
contribution there is a specific unit-level use-pressure trigger and its empirical
dissociation from novelty triggering.

\paragraph{Complementary learning systems.}
CLS-ER maintains fast/slow EMA copies of the weights as semantic memories \citep{arani2022} and
DualNet trains a slow self-supervised learner beside a fast supervised one \citep{pham2021}; both
raise the value of each replayed sample but still replay at every step. Our contribution is
orthogonal: we change \emph{where} (isolated synapses), \emph{when} (homeostatic bursts) and at
what granularity (micro-batches) replay is applied.

\paragraph{Pattern separation.}
Cerebellum-, mushroom-body- and dentate-gyrus-style sparse expansions
\citep{caycogajic2017,caycogajic2019,litwin2017} inspire continual learners such as the FlyModel
\citep{shen2021}, sparse distributed memory \citep{bricken2023} and DG-gated mixtures
\citep{kapoor2026}. We tested this family as a front-end and found no benefit
(Section~\ref{sec:results}).

\paragraph{Biology of local sleep.}
Sleep-like off-periods occur locally in awake, sleep-deprived animals \citep{vyazovskiy2011}.
Causal evidence is recent: optogenetically inducing ON/OFF alternation in one cortical hemisphere
of awake mice locally discharges sleep pressure and weakens synaptic markers (GluA1, pS845),
bilateral induction during sleep deprivation rescues memory consolidation, and an equal
\emph{tonic} reduction of firing does not discharge the pressure \citep{driessen2026}. Our refractory rotation is the algorithmic counterpart of
this alternation requirement, and our soft-rotation control (a reduction of gain rather than
an off-period) the counterpart of tonic reduction.

\section{Experiment Setup}
\label{sec:setup}

\paragraph{Data and protocol.}
Sequential axis: split-MNIST (five tasks of two classes, $5$ epochs per task) and split CIFAR-10
(same protocol; $32{\times}32$ luminance so that distance-dependent connectivity retains its
geometry). Static axis: the identical learner and budgets on the i.i.d.\ shuffle. Waking batches
have $n_w=16$ samples; the episodic buffer holds $K{=}1000$ raw samples under reservoir
writing ($200$ and $5000$ on the buffer-size axis). Forgetting $F$ is the mean drop of each
earlier task's accuracy from its own end to
the end of training (per-task matrices in Appendix~\ref{app:val}); evaluation runs with the
suppression mask off (\kwta{} only), so test order and batch size cannot affect it.
Configurations were selected on the official test splits, which served as the development
set, and then frozen. Every main-text comparison re-trains them with a stratified tenth of
the training data removed from the stream and evaluates on that tenth, which took part in no
selection; a second tenth, drawn with a different seed and sharing $9\%$ of its samples with
the first, replicates the headline rows (Appendix~\ref{app:val}). Results not re-run under
this protocol are labelled development-phase. Numbers are mean $\pm$ s.d.; the component
table, the buffer-size axis and the CIFAR-10 comparison carry six seeds, the second split
and the remaining cells three unless marked otherwise. Runs are deterministic at a fixed
CPU thread count; because the
reduction order feeds \kwta's discontinuity, a different thread count is effectively a
different seed.
A third regime uses a frozen V1-like feature front-end for CIFAR-10: $256$ normalised random
$6{\times}6{\times}3$ patches sampled from the training images act as fixed filters (stride $2$,
rectification against the per-filter mean response, quadrant average pooling; 1024-D, linear
probe $\approx49\%$). Nothing in the front-end is learned; the frozen representation serves as
the model input.
Section~\ref{sec:controller} additionally uses split CIFAR-100 (ten tasks of ten classes,
same protocol and budgets, luminance and feature variants). With tenfold fewer supervision
events per class it probes the learning rule far outside the regime its constants were tuned
in, and its absolute numbers are at the floor for every method tested (chance $1\%$, BP+ER
$6$--$10\%$); we use it for orderings and mechanisms, not as a competitive benchmark.

\paragraph{Architecture.}
$d$--$512$--$256$--$C$ with \kwta{} sparsity $10\%$ on the hidden layers (the fraction
selected in Section~\ref{sec:dial}; appendix results at the earlier $5\%$ fraction are marked
as such), $30\%$ distance-dependent connectivity on hidden synapses, Dale's law with $80\%$
excitatory units, and the single-sweep learning rule of Section~\ref{sec:method}. The
offline-night reference uses its own preferred narrower core ($256$--$128$).

\paragraph{Baselines.}
(i) Backpropagation with experience replay (BP+ER) at equal buffer $K$ (included in every
table), and, on the same network, buffer and optimiser, DER++, ER-ACE and A-GEM, each at its
best width and loss on the development split (Section~\ref{sec:cost});
(ii) the offline \emph{night}: after each waking epoch, $20$ replay batches of $256$ samples at
$3\times$ learning rate with no concurrent input (the strongest schedule in our sweep);
(iii) interleaved ER for the local learner; (iv) no buffer; (v) to test the mechanism off
the local learner, the same schedule on a backprop MLP with \kwta{} hidden layers
(Section~\ref{sec:bpk}).

\paragraph{Metrics.}
Final accuracy, forgetting $F$, replay cost $R$ (batches and samples), the isolated-channel width
$\rho_\ell$ (fraction of replay-activated units in layer $\ell$ that are asleep for the current
input), code overlap (mean pairwise Jaccard of task-active unit sets), participation-ratio
dimensionality, and an idealised 45-nm FP32 arithmetic-energy proxy \citep{horowitz2014}:
$0.9$\,pJ per FP32 accumulate (one per synaptic event) for spiking inference vs.\ $4.6$\,pJ
per multiply--accumulate for dense rate inference, after rate-to-spike conversion with thresholds
calibrated on training images and $T_s$ integration steps \citep{rueckauer2017}. The proxy
counts arithmetic only, with no data movement, membrane updates or comparisons, so it is a
relative measure rather than a hardware figure (a fabricated neuromorphic chip reports
$\approx26$\,pJ per synaptic event \citep{merolla2014convention}).

\section{Methodology}
\label{sec:method}

\subsection{Cortical base learner}

Layer activities are $a^0 = x$ and, for $\ell = 1,\dots,L$,
\begin{equation}
z^\ell = W^\ell a^{\ell-1} + b^\ell, \qquad
a^\ell = \Phi_k\!\big(\sigma(z^\ell)\odot(1-s^\ell)\big),
\label{eq:forward}
\end{equation}
where $\sigma$ is a rectifier, $s^\ell \in \{0,1\}^{n_\ell}$ is the (possibly empty)
\emph{suppression mask} of Section~\ref{sec:rotation}, and $\Phi_k$ keeps the $k$ largest entries
of each row and zeroes the rest (\kwta); a per-unit gain exists in the implementation for
homeostatic scaling but is fixed at $1$ throughout. Layers $1,\dots,L{-}1$ are hidden; layer $L$
is a linear $C$-way readout whose pre-activation $z^L$ is the output, scored by squared error
against the one-hot target $y$. Errors are produced by one top-down sweep,
\begin{equation}
\varepsilon^L = y - z^L, \qquad
\varepsilon^\ell = \sigma'(z^\ell)\odot\Big((B^\ell)^{\!\top} \big[\kappa\tanh(\varepsilon^{\ell+1}/\kappa)
\odot \mathbf{1}(a^{\ell+1}>0)\big]\Big),
\label{eq:sweep}
\end{equation}
i.e.\ a bounded ($|\varepsilon^\ell_i|\le\kappa\sum_j|B^\ell_{ji}|$ per component), event-gated
burst signal carried by learned feedback weights
$B^\ell\in\mathbb{R}^{n_{\ell+1}\times n_\ell}$ (no transport), in the spirit of
predictive-coding and burst-dependent learners with local plasticity
\citep{whittington2017,payeur2021}. Forward and feedback weights follow the same local product with a shared decay
$\lambda$ (Kolen--Pollack alignment \citep{kolen1994,akrout2019}),
\begin{equation}
\Delta W^\ell \propto \varepsilon^\ell (a^{\ell-1})^{\!\top} - \lambda W^\ell,\qquad
\Delta B^{\ell-1} \propto \varepsilon^\ell (a^{\ell-1})^{\!\top} - \lambda B^{\ell-1},
\label{eq:kp}
\end{equation}
with sign-concordant feedback under Dale's law. The default optimiser is heavy-ball SGD
($\mu=0.9$, one per-synapse velocity, no second-moment state); the $\epsilon$-floored masked
Adam of the earlier $5\%$ phases appears as a control (Section~\ref{sec:direction}) and in the
appendices. Equations~\eqref{eq:forward}--\eqref{eq:kp} were fixed by earlier
ablation ladders and are not varied here, with one exception: Section~\ref{sec:controller}
replaces the constant $\lambda$ by an adaptive per-layer controller.

\subsection{Exactly isolated replay}
\label{sec:isolation}

Let $X=\{x_b\}_{b=1}^{m}$ be the current waking batch, $a^\ell_{i,b}$ the activity of unit
$i$ on sample $b$, and $\mathcal{A}^\ell(X) = \{i : \exists b,\ a^\ell_{i,b} \neq 0\}$ the
\emph{awake set}. During the same forward step, a replay batch drawn from the buffer is
applied through the synapse mask
\begin{equation}
M^\ell_{ij} \;=\; \mathbf{1}\big[\, j \notin \mathcal{A}^{\ell-1}(X) \;\lor\; i \notin \mathcal{A}^{\ell}(X) \,\big],
\label{eq:mask}
\end{equation}
i.e.\ a synapse may take the replay update iff its presynaptic unit is silent for the current
input or its postsynaptic unit is asleep. The mask applies to the \emph{entire} replay-induced
parameter change, not only to the data gradient: with heavy-ball velocity $v$ ($\mu=0.9$),
the local update direction $G=\varepsilon^\ell(a^{\ell-1})^{\!\top}$ of \eqref{eq:kp} and the
per-step decay coefficient $\lambda$, a replay step is
\begin{equation}
v\leftarrow M\odot(\mu v+G)+(1-M)\odot v,\qquad
W\leftarrow W+\eta\,M\odot v-\lambda\,M\odot W,
\label{eq:maskedstep}
\end{equation}
so the velocity is frozen (not zeroed) outside the mask, the decay is confined to it, and
biases follow the unit mask $\mathbf{1}[i\notin\mathcal{A}^\ell(X)]$; under Adam the first and
second moments are confined the same way. As \citet{mckee2026} caution and
\citet{bricken2023} analyse for sparse learners, unmasked optimiser state would otherwise
re-inject the isolated update into awake synapses and void the guarantees below; the
accuracy consequences of this choice are examined in Section~\ref{sec:ablation}. The
readout row and its bias remain plastic, since old classes must stay calibrated; the
guarantees below concern the hidden computation. Within one waking step the order is:
forward pass on $X$ under the current suppression, unmasked waking update, awake sets read
from that forward pass, masked replay update on the updated weights
(Algorithm~\ref{alg:step}, Appendix~\ref{app:alg}). The replay micro-batch is inferred with
the suppression removed, so a refractory unit may fire on replay and learn from it, and a
burst of several micro-batches reuses the mask of its waking batch.

\begin{proposition}[Pre-silent updates are exactly invisible]
\label{prop:pre}
Fix $X$ and let $\widehat W^\ell = W^\ell + \Delta^\ell$ with $\Delta^\ell_{ij} = 0$
whenever $j \in \mathcal{A}^{\ell-1}(X)$. Then every hidden activity on every sample of $X$ is
unchanged for any magnitude of $\Delta$; if the readout row satisfies the same condition, or
is held fixed, the network output is unchanged as well.
\end{proposition}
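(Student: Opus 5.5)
The argument is an induction on depth, carried out separately for each sample $x_b\in X$. The key fact is that $\Delta^\ell$ touches only synapses whose presynaptic unit is silent on the whole batch. Since $\mathcal{A}^{\ell-1}(X)$ is defined as a union over samples, such a unit $j$ has $a^{\ell-1}_{j,b}=0$ for every $b$. For the induction I would take the hypothesis at depth $\ell-1$ to be: the activity vector $a^{\ell-1}_{\cdot,b}$ computed with the modified weights $\widehat W^1,\dots,\widehat W^{\ell-1}$ equals the original one, for every $b$. This gives equality of the awake sets as well, so the silent set at depth $\ell-1$ is the same in both networks. The base case $\ell=1$ is immediate, because $a^0=x_b$ does not depend on any weights.

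For the inductive step I expand the pre-activation in \eqref{eq:forward}:
\[
\widehat z^\ell_{\cdot,b}=(W^\ell+\Delta^\ell)a^{\ell-1}_{\cdot,b}+b^\ell=z^\ell_{\cdot,b}+\sum_j \Delta^\ell_{\cdot j}\,a^{\ell-1}_{j,b}.
\]
Every term of the correction vanishes. If $j\in\mathcal{A}^{\ell-1}(X)$, then $\Delta^\ell_{\cdot j}=0$ by hypothesis. If $j\notin\mathcal{A}^{\ell-1}(X)$, then $a^{\ell-1}_{j,b}=0$ by the inductive hypothesis. This holds regardless of the size of $\Delta$, which gives the ``any magnitude'' clause. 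The biases are untouched in the statement. With $\widehat z^\ell=z^\ell$ exactly, the composition $\Phi_k(\sigma(\cdot)\odot(1-s^\ell))$ is a deterministic function of $z^\ell$ and the fixed suppression mask $s^\ell$, so its output is identical.

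I expect the main obstacle to be this \kwta{} step. $\Phi_k$ is discontinuous, and ties could in principle make it ill-defined. The argument sidesteps this because the equality of pre-activations is exact, not approximate: any fixed deterministic tie-breaking rule applied to identical inputs returns identical outputs. I would state explicitly that we compare exact arithmetic, or the same floating-point evaluation. In floating point, adding terms $\Delta_{ij}\cdot 0=0$ still yields exactly the same sum provided the entries are finite, so the claim holds bitwise under that proviso. The suppression mask $s^\ell$ is part of the fixed context and does not depend on $\Delta$, so it is not altered either.

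Finally, the output is $z^L=W^La^{L-1}+b^L$. If the readout is held fixed, the output is unchanged because $a^{L-1}$ is unchanged. If the readout update $\Delta^L$ obeys the same pre-silent condition, the same one-line cancellation applies at layer $L$. The hidden activities, and under the stated readout condition the output, are therefore invariant on every sample of $X$, which proves Proposition~\ref{prop:pre}.
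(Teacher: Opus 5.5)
Your proposal is correct and follows the paper's proof essentially verbatim: induction on depth for each sample, with every term $\Delta^\ell_{ij}a^{\ell-1}_{j,b}$ of the pre-activation change vanishing, either because $j$ is awake (so $\Delta^\ell_{ij}=0$) or because $j$ is silent on every sample by the definition of the awake set as a union over the batch. Your remarks on ties in $\Phi_k$ and on floating-point evaluation add care that the paper leaves implicit, but the route is the same.
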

\begin{proof}
By induction on $\ell$, for every sample $b$. Assume $a^{\ell-1}_{\cdot,b}$ is unchanged
(true for $\ell=1$ since $a^0_{\cdot,b}=x_b$). For any unit $i$,
$\widehat z^\ell_{i,b} - z^\ell_{i,b} = \sum_j \Delta^\ell_{ij}\, a^{\ell-1}_{j,b}$, and each
term vanishes: if $j\in\mathcal{A}^{\ell-1}(X)$ then $\Delta^\ell_{ij}=0$; otherwise
$a^{\ell-1}_{j,b}=0$ for every $b$. Hence $z^\ell_{\cdot,b}$, and therefore $a^\ell_{\cdot,b}$
through \eqref{eq:forward}, is unchanged; the induction closes at the last hidden layer, and
at the output layer whenever the readout row satisfies the same condition.
\end{proof}

\begin{proposition}[Post-asleep updates are invisible under a margin condition]
\label{prop:post}
Let $\Delta^\ell$ and $\Delta b^\ell$ additionally move synapses and biases of units
$i\notin\mathcal{A}^\ell(X)$ with active presynaptic partners, and write
$\delta_{i,b} = \sum_j \Delta^\ell_{ij} a^{\ell-1}_{j,b} + \Delta b^\ell_i$. If for every such
unit and every sample $\sigma(z^\ell_{i,b} + \delta_{i,b}) < \tau^\ell_{k,b}$, where
$\tau^\ell_{k,b}$ is the $k$-th largest value entering $\Phi_k$ on sample $b$ (strict, so a tie
cannot promote the unit; when fewer than $k$ entries are positive, $\tau^\ell_{k,b}=0$ and the
condition reads $\sigma(\cdot)=0$), the hidden activities on $X$ are unchanged.
\end{proposition}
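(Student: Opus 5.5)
The argument is the same layer-by-layer induction on $\ell$ as in Proposition~\ref{prop:pre}, run separately for each sample $b$ of $X$. The update $\Delta^\ell$ splits into two parts. The pre-silent part has nonzero entries only in columns $j\notin\mathcal{A}^{\ell-1}(X)$. The post-asleep part, together with $\Delta b^\ell$, has nonzero entries only in rows $i\notin\mathcal{A}^\ell(X)$. The induction hypothesis is that $a^{\ell-1}_{\cdot,b}$ is unchanged, which holds for $\ell=1$ because $a^0_{\cdot,b}=x_b$. Under that hypothesis the pre-silent part contributes nothing to any $\widehat z^\ell_{i,b}$, exactly as in Proposition~\ref{prop:pre}. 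The only remaining perturbation is therefore $\widehat z^\ell_{i,b}=z^\ell_{i,b}+\delta_{i,b}$, and it falls only on units $i\notin\mathcal{A}^\ell(X)$. Every awake unit keeps its pre-activation, so its value $\sigma(z^\ell_{i,b})\odot(1-s^\ell_i)$ entering $\Phi_k$ is unchanged. The suppression mask $s^\ell$ is a fixed input to the forward pass on $X$, so it is the same before and after the update.

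The key step is to show that $\Phi_k$ returns the same output on sample $b$ when only asleep units' entries change and each changed entry stays strictly below $\tau^\ell_{k,b}$. Let $u$ be the original input row to $\Phi_k$ and $\hat u$ the perturbed one. Let $S$ be the original top-$k$ index set. Every unit in $S$ with a nonzero output is in $\mathcal{A}^\ell(X)$, so its entry is unchanged. I split into two cases.

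\emph{At least $k$ positive entries.} Here $\tau^\ell_{k,b}>0$. Every entry in $S$ that is positive is at least $\tau^\ell_{k,b}$, and every entry in $S$ belongs to a unit whose activity on $b$ is nonzero. Such a unit is awake, so its entry is unchanged. Each perturbed entry satisfies $\hat u_i\le\sigma(z^\ell_{i,b}+\delta_{i,b})<\tau^\ell_{k,b}$. Multiplying by $1-s^\ell_i\in\{0,1\}$ only lowers the value, so the inequality holds for the masked entry as well. Hence the $k$ largest entries of $\hat u$ are exactly those indexed by $S$, with the same values, and because the inequality is strict no tie-breaking issue arises at the threshold. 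Entries outside $S$ were zeroed before and are still zeroed.

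\emph{Fewer than $k$ positive entries.} Here the condition forces $\hat u_i=0$ for every perturbed unit. The set of positive entries and their values are then unchanged, so $\Phi_k$, which keeps all positive entries and outputs zero elsewhere, gives the same output.

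In both cases $a^\ell_{\cdot,b}$ is unchanged, and the induction proceeds to the next layer.

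I expect the main obstacle to be tie-breaking. If $\Phi_k$ resolves ties among equal entries by index, or if zeros can sit inside the top-$k$ when fewer than $k$ entries are positive, then a formerly zero asleep unit might swap into a "kept" slot. I would handle this by noting that a kept zero contributes activity $0$ either way, so the output vector is unaffected. It is still worth stating explicitly that the proof compares output \emph{values}, not index sets. A second point to settle is that $\tau^\ell_{k,b}$ is the threshold from the unperturbed pass, and that later layers only ever see the unchanged $a^\ell$. This is what allows the condition to be checked layer by layer without compounding across layers.
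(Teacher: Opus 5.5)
Your proof is correct and follows the paper's argument. The winners' pre-activations are untouched, because the mask leaves their active-presynaptic synapses fixed and the silent-presynaptic terms vanish as in Proposition~\ref{prop:pre}; the margin keeps every perturbed asleep entry strictly below $\tau^\ell_{k,b}$, so the same $k$ units win with the same values; and in the fewer-than-$k$ case a zero entry carries no activity whether or not $\Phi_k$ selects it. Your explicit layer-by-layer induction, and your note that $\tau^\ell_{k,b}$ is taken from the unperturbed pass, only make explicit what the paper's per-layer proof leaves implicit.
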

\begin{proof}
A unit outside $\mathcal{A}^\ell(X)$ contributes $a^\ell_{i,b}=0$ on every sample. After the
update its pre-activation on sample $b$ is $z^\ell_{i,b}+\delta_{i,b}$; under the stated margin
it is still excluded by $\Phi_k$ (or rectified to zero), so $a^\ell_{i,b}=0$ still. The sample's
original winners keep their pre-activations, because their synapses from active presynaptic
units lie outside $\Delta^\ell$ by the mask and their silent-presynaptic terms vanish as in
Proposition~\ref{prop:pre}; so the same $k$ units win with the same values, every other
unit's input is untouched, and the layer's activities are identical. When fewer than $k$ units
are positive on sample $b$, $\tau^\ell_{k,b}=0$ and the condition reads
$\sigma(z^\ell_{i,b}+\delta_{i,b})=0$: a zero-valued entry carries no activity whether or not
$\Phi_k$ selects it.
\end{proof}

\begin{corollary}[Suppressed units are exact for any magnitude]
\label{cor:refr}
If $i$ is suppressed by the rotation mask ($s^\ell_i=1$ in \eqref{eq:forward}), then
$a^\ell_{i,b}=0$ for every sample and Proposition~\ref{prop:post} holds with no margin condition.
\end{corollary}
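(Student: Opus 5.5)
The argument reads the suppression mask directly off the forward map \eqref{eq:forward}. If $s^\ell_i=1$, then the entry entering $\Phi_k$ for unit $i$ on sample $b$ is $\sigma(z^\ell_{i,b})\cdot(1-s^\ell_i)=0$, whatever $z^\ell_{i,b}$ is. Since $\Phi_k$ only keeps or zeroes entries, $a^\ell_{i,b}=0$ for every $b$, and so $i\notin\mathcal{A}^\ell(X)$. This gives the first claim, and it makes $i$ one of the units that Proposition~\ref{prop:post} allows to move.

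For the second claim, replay may change $W^\ell_{ij}$ for active $j$, and may change $b^\ell_i$, by arbitrary amounts. This produces an arbitrary $\delta_{i,b}$. The post-update entry entering $\Phi_k$ is then $\sigma(z^\ell_{i,b}+\delta_{i,b})(1-s^\ell_i)=0$, because the mask $s^\ell$ is computed from use history before the waking forward pass and is not altered by the replay update. We can therefore rerun the proof of Proposition~\ref{prop:post} with the margin condition replaced by this identity. The suppressed unit still contributes $a^\ell_{i,b}=0$. The original winners keep their pre-activations, because their active-presynaptic synapses lie outside the update mask \eqref{eq:mask} and their silent-presynaptic terms vanish by Proposition~\ref{prop:pre}. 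The layer's activities on $X$ are therefore unchanged, and induction over $\ell$ carries this up to the last hidden layer.

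The main obstacle is a reading subtlety, not a calculation. When fewer than $k$ entries are positive, the margin condition of Proposition~\ref{prop:post} asks for $\sigma(\cdot)=0$; for a suppressed unit this fails in general, since the product with $(1-s^\ell_i)$ is zero but $\sigma(z^\ell_{i,b}+\delta_{i,b})$ itself need not be. What has to be checked is the entry actually fed to $\Phi_k$. That entry is exactly $0$, so it is either excluded or kept as a zero, and in both cases it carries no activity, as in the proof of Proposition~\ref{prop:post}. When $k$ or more entries are positive, the suppressed entry $0$ lies strictly below $\tau^\ell_{k,b}>0$, so it cannot be promoted even by a tie.

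We should also state the assumption this relies on: invariance is claimed for the waking computation on $X$ under the same suppression mask $s^\ell$. Replay itself is run with suppression removed, but that does not matter here, because the guarantee concerns re-evaluating $X$ with its own mask.
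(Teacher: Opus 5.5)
Your proof is correct and follows the paper's own reasoning. The factor $(1-s^\ell_i)$ in \eqref{eq:forward} zeroes the entry fed to $\Phi_k$ whatever $z^\ell_{i,b}+\delta_{i,b}$ is, so the suppressed unit stays silent and the rest of the proof of Proposition~\ref{prop:post} goes through unchanged; this is the point of the paper's numerical example in Appendix~\ref{app:alg}, where a refractory unit's bound is void, and your remark that the literal margin is stated on $\sigma(\cdot)$ rather than on the masked entry correctly explains why the corollary is needed.
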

The propositions take the mask and the weights it protects from the same state. In the
implemented order the awake sets precede the waking update, so they hold exactly for the
pre-update weights; the one-step lag is part of the measured drift below and accounts for
$0.02$ of its $0.32$ hidden-code points.

\begin{remark}[How exact, measured]
The default system keeps the readout plastic and does not enforce the margin of
Proposition~\ref{prop:post}, so exactness beyond the proven channels is an empirical
question about the training-time computation under the suppression mask (evaluation runs
with the mask off). We measured it on the default configuration under the reported protocol
by re-inferring every waking batch, under the suppression it was inferred with, after each of
its $16{,}885$ isolated replay updates. The update altered the top hidden code of $0.33\%$ of
the waking samples per step (three seeds, $0.31$--$0.35$) and changed a waking prediction for
$0.32\%$, almost all of it through the plastic readout: with the readout isolated as well, the
prediction rate is $0.001\%$ and the hidden-code rate $0.21\%$ (one seed). The margin
condition was violated for $\sim\!10^{-4}$ of the asleep units per step, and the mean maximal
logit change per update was $0.02$ ($0.001$ with the readout isolated) on the one-hot scale.
On raw CIFAR-10 (one seed, development run) the hidden-code rate is $0.46\%$ while the
prediction rate rises to $3.6\%$, because low-margin predictions flip more easily through the
plastic readout. The rates are taken under the implemented order of Algorithm~\ref{alg:step}
(Appendix~\ref{app:alg}), in which the awake sets come from the activities before the waking
update; masking on awake sets re-inferred after that update instead lowers the hidden-code
rate only from $0.32\%$ to $0.30\%$ and leaves the prediction rate, the margin-violation rate
and the accuracy unchanged (one seed, development run), so the residual drift comes from the
unenforced margin rather than from the ordering. These rates concern the hidden code of the
\emph{current} batch; what replay does to later inputs is measured by the ablation of
Section~\ref{sec:ablation}. The isolation is thus exact for silent-presynaptic and suppressed
units and near-exact elsewhere, and Corollary~\ref{cor:refr} is why the refractory rotation
makes it robust rather than merely approximate.
\end{remark}

\subsection{Refractory rotation and its gates}
\label{sec:rotation}

\begin{definition}[Refractory rotation]
After processing waking batch $X_t$, set $s^\ell_i(t{+}1) = \mathbf{1}[i \in \mathcal{A}^\ell(X_t)]$:
every unit that fired sits out the next competition and is therefore asleep, and
consolidable, for batch $X_{t+1}$.
\end{definition}

Rotation widens the isolated channel
$\rho_\ell=|\tilde{\mathcal{A}}^\ell\setminus\mathcal{A}^\ell(X)|/|\tilde{\mathcal{A}}^\ell|$, the fraction of
the units $\tilde{\mathcal{A}}^\ell$ activated by the replay micro-batch that are asleep for the current
input, from $0.2$--$0.3$ (natural silence) to $\approx0.53$, at the price of running inference on the second-best coalition. Two internal
signals control the system (Figure~\ref{fig:mech}):

\paragraph{Homeostatic pressure (when to replay).} Each unit integrates its own use,
$S_i \leftarrow S_i + \bar a_i(X_t)$ with $\bar a_i(X_t)$ the fraction of the batch's samples on
which unit $i$ fired, and a replay \emph{burst} of $n_b$ consecutive micro-batches (one mask for
all of them) is triggered when the mean pressure over the currently asleep units of all hidden
layers exceeds $\theta$; consolidation discharges the pressure of the units that received it,
$S_i \leftarrow (1-\delta\,\mathbf{1}[i\ \mathrm{asleep}])\,S_i$ with $\delta=1$ throughout.
This is a unit-level analogue inspired by the homeostatic Process~S of sleep regulation
\citep{borbely1982}.

\paragraph{Relative novelty (when to rotate).} With fast and slow EMAs of the batch-mean
squared output error $e_t$,
$\bar e_f(t) = (1{-}\alpha_f)\bar e_f + \alpha_f e_t$, $\alpha_f{=}0.1$, and
$\bar e_s$, $\alpha_s{=}0.002$, rotation runs only while
\begin{equation}
\bar e_f(t) \;\ge\; \beta\, \bar e_s(t), \qquad \beta \approx 1.1\text{--}1.5,
\label{eq:nov}
\end{equation}
i.e.\ while the stream is \emph{new relative to the learner's own recent history} (loosely
inspired by cholinergic novelty signalling \citep{hasselmo2006}). Monitoring streaming error
with exponentially weighted averages is standard in concept-drift detection \citep{ross2012},
and related loss-ratio triggers appear in concurrent continual-learning work. The detector
itself is therefore not new; what is new is what it gates (the rotation, and hence the width
of the replay channel) and the unmastered clause below. A single absolute threshold cannot
serve across the regimes considered here: the converged error of a ten-class i.i.d.\ stream
exceeds any level a two-class task dips under, and we observe exactly this failure (gate open
$100\%$ of the time on static MNIST, $95\%$ on CIFAR, for any threshold that behaves on
split-MNIST).
Because rotation is \emph{beneficial} on streams the learner never masters
(Section~\ref{sec:results}), the final gate adds an ``unmastered'' clause: with $e_0$ the mean
error over the first $20$ batches (a chance-level estimate), rotation runs while
\begin{equation}
\bar e_f(t) \;\ge\; \beta\,\bar e_s(t)
\qquad\text{or}\qquad
\bar e_f(t) \;\ge\; \gamma\, e_0,
\qquad \gamma\approx0.5 .
\label{eq:prog}
\end{equation}
The first clause opens the gate at task switches, the second keeps it open on any stream whose
error remains a substantial fraction of chance; both are scale-free.

\begin{figure}[t]
\centering
\begin{tikzpicture}[
  font=\scriptsize, >=Latex,
  unit/.style={circle, draw=black!70, minimum size=8.5pt, inner sep=0pt},
  awake/.style={unit, fill=blue!55},
  asleep/.style={unit, fill=white},
  refr/.style={unit, pattern=north east lines, pattern color=black!60},
  lay/.style={},
]
\node at (-0.2,3.05) {\textbf{(a)} exactly isolated replay};
\foreach \i/\st in {0/awake,1/asleep,2/awake,3/refr,4/asleep}
  \node[\st] (x\i) at (0.55*\i, 2.2) {};
\foreach \i/\st in {0/asleep,1/awake,2/refr,3/awake,4/asleep}
  \node[\st] (h\i) at (0.55*\i, 1.1) {};
\foreach \i/\st in {1/awake,3/awake} \node at (0.55*\i, 0.0) {};
\node[awake] (o0) at (0.825, 0.0) {};
\node[awake] (o1) at (1.375, 0.0) {};
\draw[green!55!black, dashed, thick] (x1) -- (h0);
\draw[green!55!black, dashed, thick] (x3) -- (h2);
\draw[green!55!black, dashed, thick] (x4) -- (h4);
\draw[green!55!black, dashed, thick] (x2) -- (h0);
\draw[green!55!black, dashed, thick] (x0) -- (h2);
\draw[black!30] (x0) -- (h1); \draw[black!30] (x2) -- (h1);
\draw[black!30] (x0) -- (h3); \draw[black!30] (x2) -- (h3);
\draw[black!30] (h1) -- (o0); \draw[black!30] (h3) -- (o1);
\node[align=left, anchor=west] at (2.35, 2.2)
  {\tikz\node[awake]{}; wake-active \quad \tikz\node[asleep]{}; silent};
\node[align=left, anchor=west] at (2.35, 1.65)
  {\tikz\node[refr]{}; refractory (forced rest)};
\node[align=left, anchor=west, text width=4.3cm] at (2.35, 1.1)
  {\textcolor{green!55!black}{dashed}: replay may write\\ (pre silent $\lor$ post asleep, Eq.~\ref{eq:mask})};
\node[align=left, anchor=west, text width=4.3cm] at (2.35, 0.45)
  {\textcolor{black!40}{grey}: frozen for this input\\ (both endpoints awake)};
\begin{scope}[xshift=8.0cm]
\node at (0.6,3.05) {\textbf{(b)} internal control};
\node[draw, rounded corners, minimum width=2.5cm, minimum height=0.55cm] (wake) at (1.2, 2.35) {wake stream $x_t$};
\node[draw, rounded corners, minimum width=2.5cm, minimum height=0.55cm] (press) at (1.2, 1.45) {pressure $S_i \mathrel{+}= \bar a_i$};
\node[draw, rounded corners, minimum width=2.5cm, minimum height=0.55cm] (burst) at (1.2, 0.55) {replay burst ($n_b$ micro-batches)};
\node[draw, rounded corners, minimum width=2.2cm, minimum height=0.55cm] (nov) at (4.6, 2.35) {$\bar e_f \ge \beta \bar e_s$?};
\node[draw, rounded corners, minimum width=2.2cm, minimum height=0.55cm] (rot) at (4.6, 1.45) {rotation $s^\ell$};
\draw[->] (wake) -- (press);
\draw[->] (press) -- node[right]{$\langle S\rangle_{\mathrm{asleep}} \ge \theta$} (burst);
\draw[->, dashed] (burst.west) to[bend left=28] node[fill=white, inner sep=1pt]{\tiny discharge} (press.west);
\draw[->] (wake.east) -- (nov.west);
\draw[->] (nov) -- node[right]{novelty} (rot);
\draw[->, dashed] (rot.south) to[bend left=25] node[below right]{\ widens asleep set} (burst.east);
\end{scope}
\end{tikzpicture}
\caption{\textbf{Mechanism.} (a) During each waking batch, replay from the episodic buffer is
written only into synapses whose update is invisible to the current input, exactly so for
silent presynaptic and suppressed units (Propositions~\ref{prop:pre}--\ref{prop:post});
refractory rotation forces recently used units
into the consolidable set (Corollary~\ref{cor:refr}). (b) Two internal signals: unit-level
homeostatic pressure triggers replay bursts, and a relative-novelty gate decides when rotation
runs. The default system runs rotation always on and replays one micro-batch at every
waking batch; the pressure trigger and the novelty gate are the schedulers studied under
sparse replay budgets and lower activity fractions.}
\label{fig:mech}
\end{figure}

\begin{figure}[t]
\centering
\begin{tikzpicture}[font=\scriptsize, >=Latex]
\draw[->] (0,1.5) -- (11.3,1.5) node[right]{$t$};
\node[left] at (0,1.9) {\textbf{offline night}};
\foreach \x in {0,0.12,...,4.4} \draw[black!60] (\x,1.42) -- (\x,1.58);
\draw[fill=green!30, draw=green!50!black] (4.5,1.28) rectangle (6.0,1.72);
\node at (5.25,1.95) {night: $480 \times 256$ offline};
\foreach \x in {6.1,6.22,...,10.5} \draw[black!60] (\x,1.42) -- (\x,1.58);
\draw[->] (0,0.3) -- (11.3,0.3) node[right]{$t$};
\node[left] at (0,0.7) {\textbf{local sleep}};
\foreach \x in {0,0.12,...,10.5} \draw[black!60] (\x,0.22) -- (\x,0.38);
\foreach \x in {0.6,1.9,3.1,4.6,5.9,7.3,8.6,9.9} \draw[fill=blue!45, draw=blue!60!black] (\x,0.08) rectangle (\x+0.32,0.52);
\node at (5.5,-0.25) {replay bursts of $8$--$16$ micro-batches, pressure-triggered, isolated, concurrent with inference};
\end{tikzpicture}
\caption{\textbf{Consolidation schedules.} Ticks are waking batches. The offline night pauses the
stream; local sleep consolidates during it, in replay bursts confined to asleep units.}
\label{fig:timeline}
\end{figure}

\begin{figure}[h]
\centering
\begin{tikzpicture}[font=\scriptsize, >=Latex,
  box/.style={draw, rounded corners, inner sep=3pt, align=center, minimum height=6mm, text width=2.5cm},
  wide/.style={box, text width=5.2cm},
  exc/.style={box, dashed, draw=red!60!black, text width=2.7cm}]
\node[box] (wake)  at (0,0)     {waking batch $X_t$};
\node[box] (fwd)   at (3.3,0)   {forward pass: \kwta{} $+$ suppression $s^\ell(t)$};
\node[box] (awake) at (6.6,0)   {awake sets $\mathcal{A}^\ell(X_t)$};
\node[box] (rot)   at (10.2,0)  {rotation mask $s^\ell(t{+}1)=\mathbf{1}[i\in\mathcal{A}^\ell(X_t)]$};
\node[box] (buf)   at (0,-1.7)  {episodic buffer ($K$ raw samples, reservoir)};
\node[box] (rep)   at (3.3,-1.7){replay micro-batch};
\node[box] (mask)  at (6.6,-1.7){synapse mask $M^\ell$ (Eq.~\ref{eq:mask}); unit mask $\mathbf{1}[i\notin\mathcal{A}^\ell]$};
\node[box] (press) at (0,-3.6)  {pressure $S_i \mathrel{+}= \bar a_i(X_t)$; burst iff $\langle S\rangle_{\mathrm{asleep}}\ge\theta$};
\node[wide](step)  at (5.4,-3.6){masked heavy-ball step (Eq.~\ref{eq:maskedstep}) on $W^\ell$, $B^{\ell-1}$, $b^\ell$: velocity frozen outside $M$, decay confined to $M$};
\node[exc] (ro)    at (10.2,-3.6){readout row $W^L$, $b^L$: always plastic (the one exception)};
\draw[->] (wake) -- (fwd); \draw[->] (fwd) -- (awake); \draw[->] (awake) -- (rot);
\draw[->] (awake) -- (mask); \draw[->] (mask) -- (step);
\draw[->] (buf) -- (rep); \draw[->] (rep) -- (step.north west);
\draw[->] (wake) -- (buf);
\draw[->] (fwd.south) -- ++(0,-0.42) -| node[pos=0.22, above, font=\tiny]{wake update: unmasked} ([xshift=-0.5cm]step.north);
\draw[->, dashed] (press) -- (step);
\draw[->, dashed] (rot.north) |- ++(0,0.45) -| node[pos=0.25, above, font=\tiny]{next batch} (fwd.north);
\draw[->, dashed] (mask.east) -| (ro.north);
\end{tikzpicture}
\caption{\textbf{Training data flow.} Each waking batch is inferred under the current
suppression mask; its awake sets define the synapse and unit masks for the replay micro-batch
applied in the same step and the rotation mask for the next batch. The masked step confines
the velocity and the decay to the mask; the readout row is the one deliberately plastic
exception. Unit-level pressure decides when replay bursts run.}
\label{fig:dataflow}
\end{figure}

\subsection{Cost accounting}
\label{sec:cost}

\paragraph{Updates and wall-clock.} Samples are not the only cost. The headline schedule
makes $16{,}885$ masked updates against the night's $480$ ($35\times$; counts from the
reported runs, whose stream is nine tenths of the training set), and its training wall-clock
on one CPU thread is $8.8$ min against $1.4$ for the night, $1.0$ with no replay and $8.6$
for unmasked interleaved replay (one run each, same protocol and thread count). The
per-batch replay step, not the sample count, is the online price of consolidating during
the stream, in an unoptimised implementation whose mask construction is dense.
\paragraph{Baseline tuning and paired comparison.} The offline-night reference was swept over
replay batch count ($20$, $40$ per night), replay batch size ($16$, $64$, $256$), learning-rate
gain ($3\times$), core width ($256$--$128$ and $512$--$256$), buffer size ($200$, $1000$,
$5000$) and timing (clocked after each epoch, novelty-timed, surprise-timed), and its best
sequential cell is reported. BP+ER was swept over learning rate ($3{\cdot}10^{-4}$,
$10^{-3}$, $3{\cdot}10^{-3}$), width and buffer size, and the local learner over $\eta$
(Appendix~\ref{app:eta}), replay batch size, cadence and the gates of
Table~\ref{tab:ablation}. DER++, ER-ACE and A-GEM share BP+ER's network, Adam optimiser,
batch of $256$ waking samples with a replay batch of the same size, and reservoir buffer;
each was swept over width ($256$--$128$, $512$--$256$) and, where the method allows it, over
the loss on the incoming batch (squared error on the one-hot target, as for BP+ER, or
cross-entropy), DER++ additionally over $\alpha\in\{0.01,0.03,0.1,0.3,1\}$ and
$\beta\in\{0.5,1\}$, where $\alpha$ multiplies the per-sample sum of squared logit
differences (ten times the mean-reduced form of the published code); ER-ACE is defined with
cross-entropy. The selected cells on split-MNIST are DER++ with cross-entropy, $\alpha=0.03$
($0.3$ in the mean-reduced form), $\beta=1$ at width $512$--$256$ (development split $92.2$,
with every $\alpha$ from $0.01$ to $1$ within $0.2$ of it; its squared-error variants reach
$89.9$ at best), ER-ACE at $256$--$128$ ($90.4$) and A-GEM with squared error at $512$--$256$
($73.6$; with cross-entropy A-GEM is unstable, $37$--$43$). On raw CIFAR-10 the same sweep
selects DER++ with cross-entropy, $\alpha=1$, $\beta=1$ at $512$--$256$ ($31.1$), ER-ACE at
$512$--$256$ ($31.7$) and A-GEM at $512$--$256$ ($16.5$, the no-replay level). The backprop
\kwta{} learner of Section~\ref{sec:bpk} was swept over Adam learning rates
$\{10^{-5},3{\cdot}10^{-5},10^{-4},3{\cdot}10^{-4},10^{-3},3{\cdot}10^{-3}\}$ under the local
schedule; interleaved replay selects $3{\cdot}10^{-5}$ (development split $92.4$) and the
three other variants $10^{-4}$ ($92.9$ and $92.9$ for the two rotation variants, $84.3$ for
isolation alone). Selection
everywhere was by sequential accuracy at $K{=}1000$ on the official test split (the
development set), with the static axis reported alongside; the comparisons reported are on
the held-out split (Appendix~\ref{app:val}).
Pairing seeds by index on that split, the local system's margin over the narrow-substrate
night is $+3.1$ points (bootstrap 95\% CI $[+1.4,+5.4]$, six seed pairs), over the
same-substrate night $+2.6$ ($[+0.2,+5.2]$, six pairs; the night's seed variance makes this
one fragile, and on the second split it shrinks to $+0.3$), over unmasked interleaved ER
$+6.3$ ($[+3.8,+9.2]$), over BP+ER $+2.8$ ($[+2.5,+3.2]$), over DER++ $+0.1$
($[-0.3,+0.4]$), over ER-ACE $+1.5$ ($[+1.1,+1.9]$), over A-GEM $+23.2$ ($[+18.3,+28.6]$),
over the silent mask $+3.6$ ($[+3.1,+4.1]$) and over rotation alone $+0.1$
($[-0.3,+0.5]$).
Replay cost is measured in samples ($R_s$ = batches $\times$ batch size) and synaptic work.
The night costs $R_s = 480\times256 \approx 1.2\times10^5$ samples. The headline
configuration (one $16$-sample micro-batch per waking batch, $91.6\pm0.3$) costs
$16{,}885\times16\approx2.7\times10^5$ samples ($2.2\times$ the night). Two schedules meet
the night's budget within $1.1\times$: eight-sample micro-batches at every waking batch
($1.35\times10^5$, $91.1\pm0.4$) and $16$-sample micro-batches at every second one ($8{,}442$
updates, $1.35\times10^5$, $91.2\pm0.2$ sequential, $93.1\pm0.2$ static);
pressure-triggered bursts of $16$ at a third of the events reach $88.6\pm1.6$. Accuracy
therefore depends on the frequency of small isolated updates rather than on the number of
replayed samples.

\section{Results}
\label{sec:results}

\subsection{Local sleep replaces the night}
\label{sec:r1}

On split-MNIST at $K{=}1000$, isolated replay with refractory rotation, one replay
micro-batch beside every waking batch, reaches $\mathbf{91.6\pm0.3\%}$ (six seeds) with no
offline phase. This is above the best offline night ($88.5\pm2.8\%$ on the night's preferred
narrow substrate; $89.0\pm3.5$, highly variable across seeds, on ours; paired margins $+3.1$
$[+1.4,+5.4]$ and $+2.6$ $[+0.2,+5.2]$), above unmasked interleaved ER at the same budget
($87.5\pm0.8$ under Adam, $85.4\pm3.5$ under SGD), above BP+ER ($88.8\pm0.3$; $+2.8$
$[+2.5,+3.2]$), ER-ACE ($90.2\pm0.4$; $+1.5$ $[+1.1,+1.9]$) and A-GEM ($68.5\pm7.0$, whose
gradient projection protects the past at the expense of the present), and tied with DER++
($91.5\pm0.5$; $+0.1$ $[-0.3,+0.4]$), the one reference it does not beat. In a single pass
over the stream (one epoch per task, six seeds) the ordering sharpens: $91.8\pm0.3$ against
$90.1\pm0.7$ for DER++, $88.6\pm0.2$ for BP+ER, $88.5\pm0.3$ for ER-ACE, $83.0\pm14.9$ for
unmasked ER and $76.9\pm3.4$ for the offline night, for which one night per task is too few
(Table~\ref{tab:ablation}). Halving the replay cadence with $16$-sample batches costs $0.4$
($91.2\pm0.2$). Adding a night on top of continuous local sleep adds $0.8$ ($92.4\pm0.3$ vs.\
$91.6\pm0.3$): once consolidation runs during wake, the offline phase is nearly redundant at
this buffer size.

\paragraph{Buffer size.} Figure~\ref{fig:ksweep} varies the buffer. At $K{=}200$ the system
leads every reference ($83.0\pm1.4$ against $82.7\pm2.9$ for the narrow night,
$77.1\pm0.9$ for DER++ and $68.5\pm2.4$ for BP+ER); at $K{=}1000$ it ties DER++; at
$K{=}5000$ the backprop references pull ahead ($95.7\pm0.2$ for DER++ and $95.2\pm0.1$ for
BP+ER against $93.8\pm0.2$), while the offline night on the local learner saturates near
$88$ ($88.5\pm2.8$ at $K{=}1000$, $88.0\pm3.3$ at $5000$). The mechanism's advantage is
therefore largest where the episodic store is smallest, and at large buffers the ceiling is
the local learner's own i.i.d.\ accuracy ($94.1$), not the mechanism. The development-phase
study of tiny buffers at the $5\%$ fraction (Appendix~\ref{app:k200}) had found the narrow
night ahead there; on the held-out split at the $10\%$ fraction the two are at parity.

\begin{figure}[t]
\centering
\includegraphics[width=.5\linewidth]{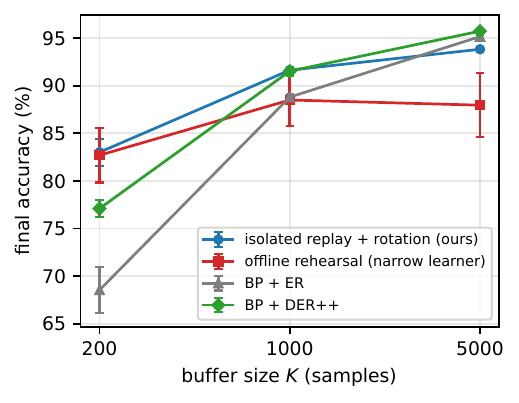}
\caption{\textbf{Buffer size.} Split-MNIST sequential accuracy against the buffer size $K$
for the system and the replay references (held-out split, six seeds).}
\label{fig:ksweep}
\end{figure}

\paragraph{A second split.} On a second, independent held-out tenth the ordering holds
(rotation $91.8\pm0.4$, DER++ $91.3\pm0.3$, ER-ACE $90.4\pm0.2$, BP+ER $88.5\pm0.7$, unmasked
ER $80.0\pm14.1$, A-GEM $70.4\pm4.1$, the narrow night $89.3\pm0.6$, the silent mask
$88.6\pm1.3$, rotation alone $91.8\pm0.6$), with one exception: the same-substrate night,
bimodal on the first split, reaches $91.5\pm0.5$ on the second and trails by only $0.3$
there (Appendix~\ref{app:val}).

\subsection{What rotation and isolation each buy}
Figure~\ref{fig:batch} (left) varies the replay batch size. The full system is flat from
$256$ down to $8$ samples ($91.6\pm0.5\to91.1\pm0.4$) and loses $1.9$ at $4$ ($89.7\pm0.7$).
At batch $16$, unmasked replay without rotation reaches $85.4\pm3.5$, isolation without
rotation $88.0\pm0.7$, and the offline night falls from $89.0$ to $85.8\pm3.0$ when its
batches shrink to $16$. The fourth cell of the square, rotation with unmasked replay, reaches
$91.5\pm0.3$ at batch $16$, $90.9\pm0.1$ at $8$ and $89.3\pm0.7$ at $4$. The two factors are
therefore strongly sub-additive: rotation is the larger single factor ($+6.1$ alone, $+3.6$
on top of isolation), and isolation's seed-paired margin is $0.1$ $[-0.3,+0.5]$ at batch
$16$, $0.3$ $[-0.2,+0.6]$ at $8$ and $0.3$ $[-0.5,+1.2]$ at $4$, within seed noise at every
size, with $0.4$ on the static axis ($94.1$ vs.\ $93.7$). At two-sample batches both systems
lose heavily, but differently. Isolated replay degrades gracefully ($79.0\pm3.2$, six seeds
within $75$--$84$), whereas unmasked replay bifurcates: three seeds finish at $82$--$85$, one
at $58$ and two at chance, i.e.\ severe failures below $60\%$ in half the seeds
(Appendix~\ref{app:isomargin}). In the seeds and configurations tested, isolation's accuracy
value is thus the absence of those failures rather than a margin. What it buys at every
batch size is the guarantee that, for the current input under its suppression mask, replay
leaves the hidden activities unchanged, exactly on the proven channels and within the margin
elsewhere. Without it every micro-batch perturbs the live coalition, and the outcome rests on
the rotation's tolerance of that perturbation, which runs out at batch $2$.

\begin{figure}[t]
\centering
\includegraphics[width=.48\linewidth]{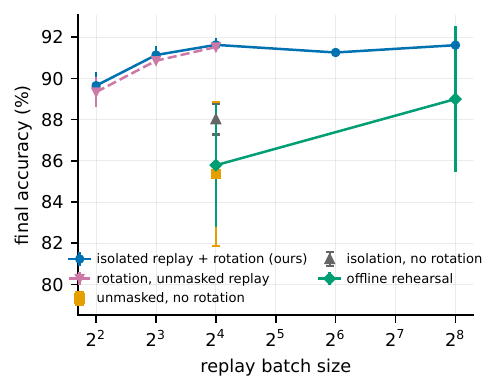}\hfill
\includegraphics[width=.48\linewidth]{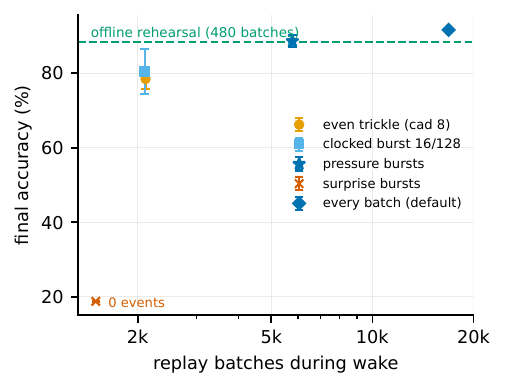}
\caption{\textbf{Left (isolation):} final split-MNIST accuracy vs.\ replay batch size
(default substrate, heavy-ball SGD); the number of updates is fixed within each series (one
per waking batch; $20$ per epoch for the night, whose samples therefore fall with its batch).
The full system is flat down to $8$ samples and loses $1.9$ at $4$ ($12.7$ at $2$,
Appendix~\ref{app:isomargin}); rotation with unmasked replay (dashed) trails it by $0.1$ to
$0.3$, within seed noise, and fails severely in half the seeds at batch $2$; without rotation,
unmasked and isolated replay and the offline night all degrade. \textbf{Right
(timing):} accuracy vs.\ number of waking replay batches. Trickles and clocked bursts fail
alike; pressure-triggered bursts (star) keep most of the accuracy at a third of the events;
surprise triggering fires no events in this regime.}
\label{fig:batch}
\end{figure}

\subsection{The direction of protection matters, and the least optimiser state suffices}
\label{sec:direction}
Three controls sharpen the construction. First, \emph{mirror isolation} implements the
classical protected direction with our own machinery: the \emph{waking} update is barred
from synapses whose endpoints both served the last replay batch (protecting the past), while
replay itself is applied unmasked. With everything else identical, mirror isolation reaches
only $79.4\pm3.5\%$, below even unmasked interleaved ER ($87.5\pm0.8$ under Adam) and far
below our direction ($91.6\pm0.3$). Constraining wake learning impairs the new task while
the unmasked replay keeps perturbing the live computation, a double loss. The reversed mask
thus costs twelve points ($20$, with high seed variance, $70.3\pm10.1$, at the $5\%$
fraction in development). It is one implementation of the protect-the-past direction, not a
test of the methods in that literature.
Second, replacing masked Adam with heavy-ball SGD (a single per-synapse velocity and no
second-moment state) inside the same isolation gives $\mathbf{91.6\pm0.3\%}$ sequential and
$94.1\pm0.2\%$ static (six seeds), a tie with masked Adam ($91.5\pm0.4/94.1\pm0.1$; unmasked
SGD control $85.4\pm3.5$). Carrying a single velocity and nothing else is simpler at equal
accuracy and closer to biology; carrying \emph{nothing} is not. With momentum zero the
isolated replay micro-batches lose the velocity's noise averaging, activity collapses to
chance at every learning rate unless the replay step is re-scaled by hand, and the best
pure-SGD setting then trails by $3.2$ sequential and $4.7$ static points
($88.4\pm0.5/89.4\pm0.3$). On the held-out split, confined and leaking moments tie
(Section~\ref{sec:ablation}), so confining the update over time costs nothing.
Third, \emph{soft} rotation (halving, rather than silencing, recent winners) loses on both
axes ($89.9\pm0.7$ sequential, $84.6\pm0.8$ static): all-or-none OFF periods beat turned-down
gain, echoing the biological finding that tonic firing-rate reduction does not discharge
local sleep pressure while ON/OFF alternation does \citep{driessen2026}.
Table~\ref{tab:ablation} in Section~\ref{sec:ablation} collects these controls together with
the remaining component knockouts. A two-sided learning-rate sweep (Appendix~\ref{app:eta})
shows that the optimiser comparison is a plateau rather than a knife-edge, with masked Adam
at its own best rate.

\subsection{When replay must be rare: homeostatic bursts, not surprise}
Figure~\ref{fig:batch} (right): at $12.5\%$ of the replay events, a clocked burst of $16$
batches every $128$ reaches $80.5\pm6.1$, no better than an even trickle at the same budget
($78.5\pm2.6$). Unit-level pressure triggering keeps $88.6\pm1.6$ at a third of the events.
Surprise triggering fails: at the $10\%$ fraction the converged error never crosses the
surprise threshold, so \emph{no} replay event fires ($18.8$, the forgetting floor), and at
the $5\%$ fraction, where it fired occasionally, it still failed at every volume. Homeostatic
triggering therefore outperforms the tested surprise trigger during wake, the opposite of the
night, whose best trigger in our sweeps is novelty-timed.

\subsection{The i.i.d.\ price of rotation}
Rotation's i.i.d.\ price depends on the activity fraction. At the $5\%$ fraction it cost
$1.5$ points on i.i.d.\ MNIST ($92.6$ vs.\ $94.1$ rotation-free; $1.3$--$3.0$ across the
development-phase gate grids) and, in development, \emph{reversed} into a $+2.5$ to
$+6.5$-point static gain on split CIFAR-10, where forced rotation acts as a regulariser in
the underfit regime. At the $10\%$ fraction the MNIST price is $1.3$ points ($94.1$ vs.\
$95.4\pm0.2$ for the same mask without rotation; the unmasked rotation-free control
reaches $93.8\pm5.0$, with one of six seeds collapsed) and the CIFAR static effect is
inconclusive under the available seeds ($28.0\pm1.2$ vs.\ $30.0\pm1.9$). Under masked Adam,
a relative-novelty
gate with the unmastered clause (Eqs.~\eqref{eq:nov}--\eqref{eq:prog}) removed the $5\%$
price with one setting across all four regimes; the full gate study, including why a single
absolute threshold cannot serve across these regimes and why unit-level pressure gating only
trades the axes, is in Appendix~\ref{app:gates}. Under heavy-ball SGD the gate had to be
committed to bouts (Section~\ref{sec:ablation}). At the $10\%$ fraction both remedies become
largely unnecessary, because the activity fraction itself (Section~\ref{sec:dial}) shrinks
the price at its source.

\subsection{Transfer to CIFAR-10}
On split CIFAR-10 (grayscale, same protocol; six seeds) the ordering among the
local-learner variants and BP+ER transfers and widens: isolated replay with rotation
$28.4\pm0.8$ $>$ offline night $25.1\pm2.7$ $>$ BP+ER $24.6\pm0.6$ $>$ unmasked ER
$14.6\pm4.6$ $\approx$ no buffer $16.3$ (the same on the second split:
$29.7>26.5>25.8>15.5$), despite task-active code overlap of $0.9$--$0.98$. The two
cross-entropy references under backprop lead here: ER-ACE $31.6\pm1.0$ and DER++
$30.7\pm0.8$ (seed-paired margins over the system $+3.2$ $[+2.1,+4.4]$ and $+2.2$
$[+1.1,+3.2]$; $31.7$ and $31.0$ on the second split), while A-GEM stays at the no-replay
level ($16.5\pm0.2$). In a single pass the system reaches $26.0\pm1.9$ against $29.5\pm0.8$
for ER-ACE, $25.8\pm0.7$ for BP+ER, $24.4\pm1.2$ for DER++ and $21.6\pm1.0$ for the night:
ER-ACE keeps its lead, DER++ falls behind. On the V1-like
feature front-end the local ordering persists (refractory $42.3\pm0.6$ $>$ offline night
$34.0\pm0.8$ $\gg$ unmasked $20.5\pm1.0$), but BP+ER reaches $42.8\pm0.7$ at its best
learning rate, the reverse of the raw-pixel ordering. At the $5\%$ fraction the gap was
$4.7$ points (local learner at $36.8$, development phase); tuning both sides and raising the
activity fraction reduce it to $0.5$ ($42.3\pm0.6$ local vs.\ $42.8\pm0.7$ BP+ER). Most of
the gap was therefore the local learner's optimiser and its starved activity fraction, not
its learning rule. On raw pixels the local learner leads BP+ER and the offline night
outright and trails only the cross-entropy references, ER-ACE and DER++, by two to three
points.

The residual gap is consistent with the \emph{substrate}, not the learning rule.
Transplanting the local network's ingredients into the BP net one at a time (each cell at its
better of two learning rates): its width \emph{helps} BP+ER ($43.7\pm1.0$ for a dense
$512$--$256$ net); its $30\%$ distance-dependent wiring, drawn by the same generator on the
same sheets, is neutral ($43.5\pm0.2$); and its \kwta{} costs $3$ points ($40.4\pm1.5$ at
$10\%$; $39.5\pm0.6$ at $5\%$), leaving backprop-with-replay \emph{below} the local learner
on the identical substrate ($42.3\pm0.6$ vs.\ $40.4\pm1.5$ at $10\%$; three seeds each).
What separates the two systems on feature inputs is dense activation, not backpropagation's
credit assignment: on the substrate used here, the decomposition leaves no gap attributable
to the learning rule. The advantage over \emph{other local schedules} (nights, unmasked ER)
holds across all three input regimes.

\subsection{The mechanism on a backprop learner}
\label{sec:bpk}
Nothing in the construction requires the local rule: Propositions~\ref{prop:pre}--\ref{prop:post}
concern the forward computation, not the way the update was computed. Table~\ref{tab:bpk}
therefore runs the schedule of Section~\ref{sec:isolation} unchanged on a backprop MLP of the
same width with \kwta{} hidden layers at the same $10\%$ fraction: waking batches of $16$
with an unmasked Adam step, awake sets read from the waking forward pass, one $16$-sample
replay micro-batch per waking batch with the synapse mask of \eqref{eq:mask} applied to the
weight change and to Adam's moments (the readout unmasked, the input treated as always
awake), and the refractory rotation as a suppression mask on the next forward pass. The
learning rate is selected on the development split per variant (Section~\ref{sec:cost}).

\begin{table}[t]
\centering
\caption{\textbf{The same schedule on a backprop learner} ($512$--$256$, \kwta{} $10\%$, Adam
with weight decay $10^{-3}$, waking batches of $16$ with one $16$-sample replay micro-batch
each, $K{=}1000$; held-out split, six seeds; learning rate selected on the development split).}
\label{tab:bpk}
\small
\begin{tabular}{@{}lcccc@{}}
\toprule
BP $+$ \kwta{} & lr & Sequential & i.i.d.\ (static) & Seq., single pass \\
\midrule
interleaved replay (unmasked) & $3{\cdot}10^{-5}$ & $91.7\pm0.4$ & $97.2\pm0.1$ & $89.3\pm0.6$ \\
isolated replay & $10^{-4}$ & $84.7\pm2.6$ & $97.6\pm0.1$ & $81.9\pm2.1$ \\
interleaved replay $+$ rotation & $10^{-4}$ & $\mathbf{92.2\pm0.3}$ & $96.7\pm0.3$ & $92.7\pm0.3$ \\
isolated replay $+$ rotation & $10^{-4}$ & $92.1\pm0.3$ & $96.6\pm0.1$ & $\mathbf{92.8\pm0.2}$ \\
\midrule
local learner, full system (Table~\ref{tab:ablation}) & & $91.6\pm0.3$ & $94.1\pm0.2$ & $91.8\pm0.3$ \\
BP $+$ DER++ (Table~\ref{tab:ablation}) & & $91.5\pm0.5$ & --- & $90.1\pm0.7$ \\
\bottomrule
\end{tabular}
\end{table}

Rotation transfers. It lifts interleaved replay from $91.7\pm0.4$ to $92.2\pm0.3$
(seed-paired $+0.5$ $[+0.2,+0.8]$) and from $89.3$ to $92.7$ in a single pass ($+3.4$
$[+2.8,+4.0]$), to the level of the local learner ($+0.6$ $[+0.3,+1.0]$ paired, same
schedule) and of DER++ under its own batch-$256$ protocol ($91.5$; the schedules differ, so
this is a reference level, not a like-for-like margin), at a static price of $0.5$--$0.9$
points, smaller than the local learner's $1.3$. Isolation on top of rotation again costs
nothing ($-0.1$ $[-0.3,+0.1]$;
single pass $+0.1$) and brings the guarantee. Isolation alone, however, hurts the backprop
learner ($84.7\pm2.6$, $-7.0$ $[-8.9,-5.2]$ against interleaved replay) where it helped the
local one ($+2.6$, Table~\ref{tab:ablation}), although the natural silence is comparable
($0.63$ and $0.81$ of the units asleep per layer against $0.7$--$0.8$ on the local
learner): the channel that natural silence leaves open is too narrow for this learner, and
on backprop the guarantee is affordable only through the rotation. Two further readings
follow. The schedule itself is worth three points to backprop, since interleaved replay in
$16$-sample micro-batches reaches $91.7$ against $88.8$ for BP+ER in batches of $256$
(Table~\ref{tab:ablation}); and the backprop learner prefers a far smaller step for the
unrotated variant ($3{\cdot}10^{-5}$) than for the rotating ones ($10^{-4}$), which is the
optimiser-side signature of the same effect: without rotation, interleaved replay survives
only by moving slowly.

\subsection{Energy}
Converted to spiking inference ($T_s{=}8$; firing thresholds calibrated on training images,
evaluated on the held-out tenth), the default system (always-on rotation, heavy-ball SGD,
$10\%$ fraction; rate accuracy $94.2\%$ on the measured seed) runs at $92.7\%$ and
$117$\,nJ/sample vs.\ $2461$\,nJ for dense rate inference ($21\times$) and $526$\,nJ for
event-driven rate inference ($4.5\times$), improving with the time budget ($94.0\%$ at
$T_s{=}16$, $236$\,nJ; $94.0\%$ at $T_s{=}32$). The rotation-free control converts to
$94.2\%$ at $T_s{=}8$ and dips slightly to $93.9\%$ at $T_s{=}32$; the interaction between
rotation and spike calibration is therefore small under heavy-ball SGD on the held-out
split.

\subsection{Negative results}
The following alternatives were tested under matched budgets and rejected: a frozen
dentate-gyrus-style sparse expansion front-end (input decorrelation does not propagate
through learned \kwta{} layers: task overlap at the input drops $0.79\to0.18$, yet every
downstream metric worsens by $3$--$11$ points); generative (REM) replay as a substitute for
episodic samples; margin-based reverse learning; surprise-gated memory \emph{writing};
multiplicative burst coding; absolute-threshold rotation gates; pruning anchored to sparse
replay bursts (Appendix~\ref{app:downsel}); soft (gain-reduction) rotation and
mirror-direction isolation (Section~\ref{sec:direction}); an OFF-side gate refractory, the
inactive half of the flip-flop in this system; strongly coupled synaptic anchors; and
utility-exempt rotation as a route past the bout gate (all Section~\ref{sec:ablation}).

\section{Ablation Study}
\label{sec:ablation}

The system under test combines five ingredients: a biologically constrained substrate, the
isolation mask \eqref{eq:mask}, the refractory rotation, a heavy-ball optimiser, and the
gate \eqref{eq:prog}. This section removes or replaces one ingredient of the consolidation
mechanism at a time (Table~\ref{tab:ablation}). The substrate beneath it was fixed by earlier
ablation ladders; its cost accounting ($1.2$ points below a dense backpropagation MLP at
$4\times$ fewer synaptic operations per pass) is reproduced in Appendix~\ref{app:substrate}.
Every mechanism cell shares the default substrate ($512$--$256$ hidden, $10\%$ \kwta{},
$30\%$ distance-dependent connectivity), the same buffer ($K{=}1000$, random writing) and the
same replay budget (waking batch $16$, replay batch $16$ after every waking batch). The
appendices report the corresponding grids at the $5\%$ fraction, where each ordering below
was first established.

\begin{table}[t]
\centering
\caption{\textbf{Component ablation of the consolidation mechanism} on the default substrate
($512$--$256$/$10\%$; backprop references dense, at the widths selected in
Section~\ref{sec:cost}). Sequential $=$ final accuracy
on split-MNIST; static $=$ i.i.d.\ MNIST (the axis that must not degrade). Dashes:
combination not applicable; blank: not run. Single pass $=$ one epoch per task, otherwise
identical. Held-out split; six seeds, except the gate, anchor, optimiser and unmasked-Adam
rows (three).}
\label{tab:ablation}
\footnotesize\setlength{\tabcolsep}{4pt}
\begin{tabular}{@{}lccc@{}}
\toprule
Variant & Sequential & i.i.d.\ (static) & Seq., single pass \\
\midrule
\textbf{full system}: isolated replay $+$ always-on rotation (heavy-ball SGD) & $\mathbf{91.6\pm0.3}$ & $94.1\pm0.2$ & $\mathbf{91.8\pm0.3}$ \\
\quad rotation gated by relative novelty, committed to bouts & $91.8\pm0.3$ & $94.4\pm0.3$ &  \\
\quad rotation gated per batch & $87.5\pm1.6$ & $94.7\pm0.1$ &  \\
$+$ two-timescale anchor ($\lambda{=}\mu{=}3{\cdot}10^{-4}$), rotation always on & $91.7\pm0.0$ & $93.8\pm0.3$ &  \\
$-$ rotation (same mask, Eq.~\ref{eq:mask}, natural silence only) & $88.0\pm0.7$ & $95.4\pm0.2$ & $86.4\pm1.0$ \\
$-$ isolation, $-$ rotation (unmasked interleaved replay) & $85.4\pm3.5$ & $93.8\pm5.0$ & $83.0\pm14.9$ \\
$-$ isolation, $+$ rotation (rotation alone) & $91.5\pm0.3$ & $93.7\pm0.3$ & $91.5\pm0.8$ \\
readout-only replay (hidden layers take no replay update) & $41.7\pm10.9$ & $81.2\pm2.5$ & $25.1\pm3.0$ \\
random rotation (matched suppression count) & $91.1\pm0.7$ & $92.3\pm0.3$ & $90.5\pm0.9$ \\
$-$ replay (no buffer) & $19.4\pm0.0$ & $95.1\pm0.1$ & $18.1\pm0.5$ \\
\midrule
masked Adam, moments confined to the mask & $91.5\pm0.4$ & $94.1\pm0.1$ &  \\
masked Adam, moments leaking outside the mask & $91.6\pm0.6$ & $94.0\pm0.3$ &  \\
momentum $0$ (pure SGD; $\eta$ and replay gain re-tuned, else chance) & $88.4\pm0.5$ & $89.4\pm0.3$ &  \\
\midrule
offline night instead of concurrent replay, same substrate & $89.0\pm3.5$ & --- & $76.9\pm3.4$ \\
\quad the same night on its preferred narrow substrate & $88.5\pm2.8$ & --- & $71.5\pm1.9$ \\
mirror direction: protect the past & $79.4\pm3.5$ & $90.2\pm3.0$ & $80.9\pm4.1$ \\
soft rotation: gain $0.5$ instead of $0$ & $89.9\pm0.7$ & $84.6\pm0.8$ & $87.8\pm1.4$ \\
\midrule
unmasked ER at the same budget (Adam) & $87.5\pm0.8$ & $95.1\pm0.2$ &  \\
backprop $+$ ER (plain backprop on the static axis), dense & $88.8\pm0.3$ & $97.4\pm0.1$ & $88.6\pm0.2$ \\
backprop $+$ DER++ & $91.5\pm0.5$ & --- & $90.1\pm0.7$ \\
backprop $+$ ER-ACE & $90.2\pm0.4$ & --- & $88.5\pm0.3$ \\
backprop $+$ A-GEM & $68.5\pm7.0$ & --- & $46.8\pm5.0$ \\
\bottomrule
\end{tabular}
\end{table}

\subsection{Reading the component table}
The first block prices each ingredient by what its removal costs the sequential axis under
heavy-ball SGD. Rotation is worth $+3.6$ points ($91.6$ vs.\ $88.0$ for the same mask on
natural silence alone), rotation and isolation together $+6.2$ with a twelve-fold smaller
seed deviation ($91.6\pm0.3$ vs.\ $85.4\pm3.5$), isolation alone $+2.6$ ($88.0$ vs.\
$85.4$), isolation on top of rotation $+0.1$ ($91.5\pm0.3$ for rotation with unmasked
replay; $[-0.3,+0.5]$ paired), and replay is worth everything ($19.4$ is the forgetting
floor). Replay confined to the readout row, with the hidden layers frozen for replay, falls
to $41.7\pm10.9$ (ten classes) and costs $13$ static points: readout-only replay cannot
account for the full system's accuracy, so hidden-layer replay updates are essential to it.
A random suppression of matched size in place of the refractory rule recovers most of
rotation's gain ($91.1\pm0.7$) but trails it by $0.5$ sequential and $1.8$ static points
with $1.3\times$ the forgetting ($F{=}8.3$ vs.\ $6.4$): alternating the coalition carries
most of the effect, and resting the recent winners the rest. The static column shows which
component pays: rotation is the only statically costly one ($94.1$ always-on against
$95.4$ for the same mask without it, and $93.7$ with rotation and no isolation), a price of
$1.3$ points at the $10\%$ fraction; the unmasked rotation-free control's static cell
($93.8\pm5.0$) hides one collapsed seed among five at $95.4$--$96.0$. The single-pass column
repeats the sequential ordering with the margins widened: every variant without rotation
loses more in one pass than in five, and the offline night most of all. The unmasked
control is not handicapped by the
replay step ($3\eta$ against a waking step of $\eta/16$). Sweeping the replay gain over
$\{1/16,1/4,1,3,10\}$ moves masked replay $67.2\to86.8\to90.8\to91.6\to74.2$ and unmasked
replay $61.6\to81.4\to85.7\to85.4\to47.3$ (sequential; three seeds, six at gain $10$, where
both destabilise): masked replay peaks at the default, unmasked replay plateaus from gain $1$
to $3$, and the rotation-free unmasked control never closes the gap. The sweep bounds that
control; isolation's own contribution is estimated by the square above. The gates that were
built to refund a larger price at the $5\%$ fraction are neutral here: the bout-committed
gate trades $+0.2$ sequential for $+0.3$ static ($91.8\pm0.3/94.4\pm0.3$), the per-batch gate
is erratic ($87.5\pm1.6/94.7\pm0.1$), and the two-timescale anchor no longer adds anything
($91.7\pm0.0/93.8\pm0.3$). Their mechanisms remain instructive. At $5\%$ the per-batch gate's
flicker destroyed heavy-ball SGD's gain; committing the gate to bouts (loosely modelled on
the consolidated bouts of the sleep--wake switch \citep{saper2005}, which motivates commitment
but not the specific minimum duration) repaired it; and bout-length and OFF-refractory
controls located the residual price in the rotation of settled coalitions itself
(Appendix~\ref{app:probes}). At the $10\%$ fraction, however, the larger activity fraction
has absorbed most of what the gates bought, and the default system is simply always-on
rotation.

The middle block tests the confinement requirement itself.
Propositions~\ref{prop:pre}--\ref{prop:post} require Adam's moments to advance only inside
the mask. State advancing outside re-injects the replay gradient into awake synapses through
the next unmasked waking step: the present batch's invariance still holds, since the weight
change itself stays masked, but the confinement of the update over time is lost.
Empirically, on the held-out split, confined and leaking moments tie ($91.5\pm0.4$ vs.\
$91.6\pm0.6$), as does the single velocity ($91.6\pm0.3$). The velocity is kept as the least
state that stays confined to the mask at equal accuracy; removing it as well (momentum zero,
Table~\ref{tab:ablation}) costs $3.2$/$4.7$ points once the replay step has been re-scaled
to survive at all.

One budget note completes the block: half the cadence with $16$-sample batches costs $0.4$
($91.2\pm0.2$), so the system prefers consolidation to run continuously.

\subsection{Two probes of the residual static price}
At the $5\%$ fraction, the OFF-refractory control located the bout gate's residual static
price in the rotation of settled coalitions itself. Two mechanisms addressed that diagnosis
directly (details in Appendix~\ref{app:probes}; $5\%$ numbers). \emph{Utility-exempt
rotation}, which spares the top-$q$ long-use units from ever being benched, only traces a
trade-off that the bout gate dominates: temporal commitment beats structural exemption.
\emph{Two-timescale synapses} in the spirit of synaptic consolidation cascades
\citep{benna2016} do not dissolve the price, but weak symmetric coupling acts as a
\emph{sequential stabiliser} at $5\%$ ($92.4\pm0.2$, variance halved, static parity); at the
$10\%$ fraction the anchor is neutral ($91.7\pm0.0/93.8\pm0.3$). Anchor and bout gate compose
sub-additively. Figure~\ref{fig:frontier} draws the operating frontier.

\begin{figure}[t]
\centering
\includegraphics[width=.6\linewidth]{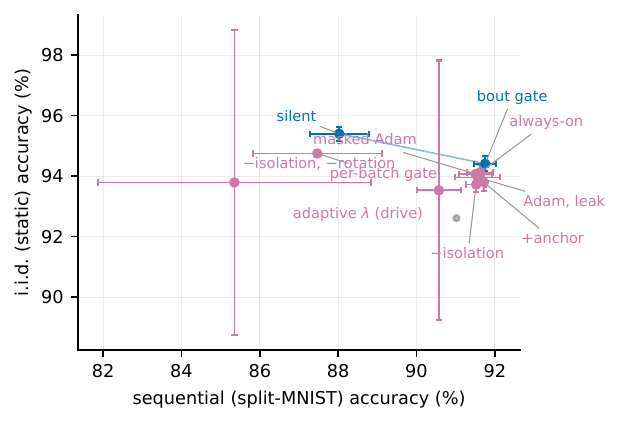}
\caption{\textbf{The operating frontier} at the $10\%$ fraction under heavy-ball
SGD: split-MNIST sequential accuracy against i.i.d.\ static accuracy (headline points
six seeds). Blue: non-dominated by seed means (the bout-committed gate and the silent mask);
purple: dominated; held-out split. The top cluster (always-on rotation, bout gate, anchor,
masked Adam) lies within seed noise of one another; the frontier trades sequential for
static accuracy through the silent mask, the unmasked rotation-free control is dominated by
it once its collapsed seed is counted, and the adaptive-$\lambda$ point
(Section~\ref{sec:controller}) is dominated on this split.}
\label{fig:frontier}
\end{figure}

\begin{figure}[h]
\centering
\includegraphics[width=.48\linewidth]{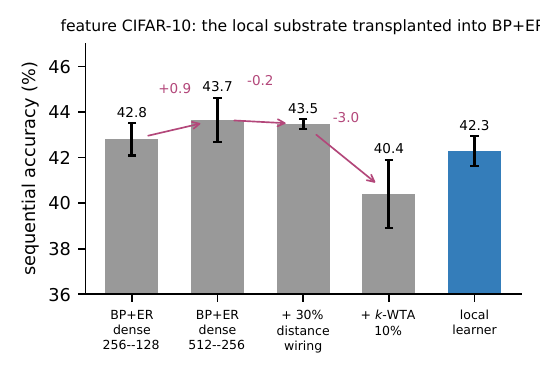}\hfill
\includegraphics[width=.48\linewidth]{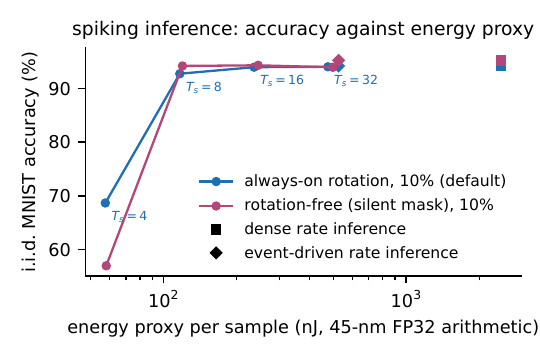}
\caption{\textbf{Left:} the substrate decomposition on feature CIFAR-10, transplanting the
local network's ingredients into BP+ER one at a time (each cell at its better of two learning
rates; three seeds). \textbf{Right:} spiking inference,
i.i.d.\ MNIST accuracy against the idealised 45-nm FP32 arithmetic-energy proxy for
$T_s\in\{4,8,16,32\}$, with the dense (square) and event-driven (diamond) rate references;
held-out split; the default is monotone in $T_s$, the rotation-free control dips $0.4$ at
$T_s{=}32$.}
\label{fig:appfigs}
\end{figure}

\subsection{The activity fraction}
\label{sec:dial}
The substrate's $5\%$ \kwta{} fraction was fixed early in the project. The decomposition
above suggested that it was too low: if sparse \kwta{} costs BP several points on features,
it presumably costs our own learner too. It does, on both regimes, and the cost is recovered
almost for free; this subsection documents the sweep that led to the $10\%$ fraction used
throughout the main text. On feature CIFAR-10, relaxing the fraction improves \emph{both}
axes monotonically: sequential $40.9\to42.9\pm0.6$ and static $38.3\to46.4\pm1.6$ from $5$
to $25\%$, with forgetting slightly \emph{reduced}. The BP control traces the same curve
($39.5\to40.4\to40.7$ at $5$, $10$, $15\%$), with the local learner $1.4$--$1.9$ points ahead
at every matched level. On MNIST the sequential axis is flat from $10\%$ upward: always-on
rotation reaches $\mathbf{91.6\pm0.3}$ sequential at $94.1\pm0.2$ static (six seeds), $15\%$
gives $91.4\pm0.4/94.4\pm0.2$ and $25\%$ gives $91.5\pm0.4/94.6\pm0.2$, against
$91.0\pm0.5/92.6\pm0.0$ at $5\%$.

Three controls sharpen the reading. First, the no-buffer ceiling itself rises with the
fraction ($94.1$ at $5\%$, $95.1$ at $10\%$), so rotation's static price \emph{at matched
sparsity} shrinks from $1.5$ to $1.0$ points, and the need for a gate shrinks with it: at
$10\%$ always-on rotation no longer trails the bout-gated system. Second, in development,
rotation-free silent isolation gained nothing from the larger fraction
($89.0\to89.3\to88.7$, official test set): the change helps specifically the
\emph{rotating} family, which at $5\%$ was starved of representational room. Third,
asymmetric schedules (a mild input-facing layer above a sparse deep layer) failed in
development, with sequential parity to uniform relaxation but a $3$--$4$-point static
deficit: the cost is levied at every layer. The larger fraction is free elsewhere, since
synaptic operations rise only $3$--$5\%$. The default substrate therefore uses $10\%$;
Table~\ref{tab:ablation} and every main-text number use it, with the complete mechanism
grid re-established there and all orderings preserved, while the appendices report the
$5\%$ grids on which each mechanism was first isolated. The activity fraction, not the
gate, is the cheapest remedy for rotation's static price (Figure~\ref{fig:frontier}).

\section{An adaptive decay controller}
\label{sec:controller}

\paragraph{The last tuned constant.}
Every result above runs the rule of \eqref{eq:kp} with a decay $\lambda$ tuned once per
dataset ($10^{-3}$ on the image benchmarks). Split CIFAR-100, ten tasks of ten classes,
exposes what that constant hides. With one hundred classes, each class receives a tenth of
the per-epoch supervision events, while the decay acts on every synapse at every step,
undiminished. At the MNIST-tuned value the hidden-to-hidden weights follow the pure-decay
trajectory $(1-\lambda)^t$ \emph{exactly}, because the thinned error signal loses the race
outright; deep activity collapses by $5\times$, the readout gradient goes to zero, and the
network lands at chance, $2.4\%$ i.i.d.\ against $20$--$23\%$ for a ten-class subset of the
same images on the same substrate (development-phase diagnosis). The class count of the
supervision signal is a hidden variable that every fixed $\lambda$ is silently tuned against.
A manual rescue ($\lambda{=}10^{-4}$, $3\times$ targets) revives activity but survives the full
protocol only at $1.0$--$1.2\%$.

\paragraph{The controller.}
Our design principle that control signals must be self-referenced rather than absolute
(Section~\ref{sec:rotation}) applies at the synapse as well. We replace the constant by a
per-layer ratio controller,
\begin{equation}
\lambda_\ell \;=\; \min\!\Big(\lambda_{\max},\; \rho\,
\frac{\mathrm{EMA}\big[\,u^\ell\,\big]}{\overline{|W^\ell|}+\epsilon}\Big),
\qquad \rho=0.25,\;\; \lambda_{\max}=0.02,
\label{eq:controller}
\end{equation}
where $u^\ell$ is the mean magnitude of the data-driven, post-optimiser, \emph{pre-decay}
waking increment applied to layer $\ell$ (so the decay never feeds its own drive), the EMA
(coefficient $0.02$) runs over waking, unmasked steps only, $\epsilon=10^{-12}$ floors the
denominator, and $W^\ell$ and $B^{\ell-1}$ share $\lambda_\ell$, so Kolen--Pollack alignment
is untouched. The decay's mean magnitude per step, $\lambda_\ell\overline{|W^\ell|}$, is held at
the fraction $\rho$ of the mean data-driven update magnitude: layers and regimes with rich
supervision keep a strong regulariser, starved ones are automatically spared. $\rho$ is the
only control target; $\lambda_{\max}$, the EMA coefficient and $\epsilon$ are global
safeguards never tuned per dataset. Isolation and rotation decide \emph{where} and
\emph{when} to consolidate; the controller decides \emph{how strongly} to update as the
scale changes. The same principle underlies every control signal in this paper: novelty is a
fast-over-slow error ratio, homeostasis a unit's own use, decay a drive-over-weight ratio,
all referenced to the learner's own current state and none absolute. A gradient-referenced
variant ($\rho\,\eta\,\mathrm{EMA}[\,\overline{|g^\ell|}\,]/\overline{|W^\ell|}$) behaved
equivalently in development (official test set, not re-run under the held-out protocol); we
use the drive-referenced form because it is optimiser-agnostic.

\begin{table}[t]
\centering\small
\caption{\textbf{One dataset-independent ratio target replaces per-dataset decay tuning.} Sequential /
static accuracy (\%) under the tuned fixed decay ($10^{-3}$ on the image benchmarks;
$10^{-4}$ with $3\times$ targets as the CIFAR-100 rescue) against the drive-referenced
controller of \eqref{eq:controller} at the same $\rho=0.25$ everywhere; held-out split, three
seeds (six on split-MNIST and in the fixed-decay split CIFAR-10 cell); the full system
(refractory rotation, isolated replay, $K{=}1000$) in every cell.}
\label{tab:controller}
\begin{tabular}{lcccc}
\toprule
& \multicolumn{2}{c}{fixed $\lambda$ (tuned)} & \multicolumn{2}{c}{controller (drive)} \\
& seq. & static & seq. & static \\
\midrule
split-MNIST & $91.6\pm0.3$ & $94.1\pm0.2$ & $90.6\pm0.6$ & $93.5\pm4.3$ \\
split CIFAR-10 & $28.4\pm0.8$ & $28.0\pm1.2$ & $26.2\pm1.1$ & $30.1\pm0.2$ \\
feature CIFAR-10 & $42.3\pm0.6$ & $42.4\pm1.1$ & $39.1\pm0.4$ & $43.6\pm0.6$ \\
split CIFAR-100 & $1.2\pm0.1$ & $1.0\pm0.1$ & $3.2\pm0.8$ & $4.1\pm0.6$ \\
\quad + features & $1.1\pm0.3$ & $1.1\pm0.2$ & $5.2\pm1.6$ & $6.4\pm0.6$ \\
\bottomrule
\end{tabular}
\end{table}

\paragraph{One ratio target, every dataset.}
Table~\ref{tab:controller} gives the cross-dataset comparison. On the sequential axis the
controller concedes $1$--$3$ points to the per-dataset tuned value. On the static axis it
gains $1.2$--$5.3$ points on CIFAR-10, feature CIFAR-10 and both CIFAR-100 variants; on
split-MNIST it gains $1.1$ on the second held-out split ($95.2\pm0.4$ vs.\ $94.1\pm0.6$) but
one of six seeds collapsed on the first ($93.5\pm4.3$ vs.\ $94.1\pm0.2$), so we claim no
static gain there. On CIFAR-100 the controller lifts the system $2.7$--$4.7\times$ over the
manual rescue with no dataset-specific decay coefficient (floor-regime numbers, reported for
ordering only; the BP+ER reference reaches $6.8$ raw and $10.1$ with features). The realised
$\lambda_\ell$, logged throughout, differentiates exactly as the tuning history would
predict: $\sim\!10^{-5}$ on MNIST sequential streams, $\sim\!10^{-4}$ on static ones, up to
$3\times10^{-3}$ on raw CIFAR, and layer-graded within each net. The per-dataset $\lambda$
ladder was therefore a signal-to-decay ratio in disguise. On the operating frontier
(Figure~\ref{fig:frontier}) the controller point is dominated by the default on the first split
($90.6/93.5$ against $91.6/94.1$): on MNIST the tuned fixed decay remains the better setting on
both axes, and the controller's value lies on harder inputs and at depth.

\paragraph{Depth is the same boundary, and two mechanisms cross it.}
The class count thins the error signal per class; depth attenuates it per layer. This is the
same starvation on a different axis. A depth ladder (three, four and five hidden layers,
$512$--$256$--$128\ldots$, everything else the default configuration) makes the parallel
exact. Under the tuned fixed decay without skips, the static axis dies at four hidden layers
($10.1\pm0.2\%$, chance) and the sequential axis collapses at five ($26.3\pm20.2$); under the
controller every cell stays alive and degrades gracefully ($90.7\to88.9\to88.1$ sequential,
$95.3\to94.8\to89.6$ static), with the realised $\lambda_\ell$ dropping to $3\times10^{-6}$
in the starved middle layers. The residual toll is then architectural, and a biologically
unexceptional bypass removes it. ResNet-style skip synapses $S^\ell$ carry $a^{\ell-2}$ into
each deep hidden layer; they are learned by the same local product with a Kolen--Pollack
feedback twin and the shared per-layer decay, and replay-masked under the same
pre-or-post-asleep rule (Appendix~\ref{app:alg}). They restore the thick error path, and the
five-hidden-layer system reaches $91.0\pm0.3$ sequential and $95.8\pm0.3$ static (six seeds),
against the two-layer default's $91.6/94.1$ and the two-layer controller's $90.6/93.5$: $0.6$
sequential points below the default and $1.7$ above it on the static axis, with $95.8\pm0.1$
at depth four. The bypass alone also rescues the fixed decay ($90.4/93.5$ at depth five),
which indicates that error-path attenuation, rather than decay mis-tuning as such, is a
principal contributor to the depth collapse; the controller still adds its usual static
margin on top of the bypass. We did not re-tune the fixed decay per depth: the claim is that
the controller needs no re-tuning, not that no fixed value would serve. A BP+ER reference at
the same widths is depth-insensitive ($88.9\to87.9$ sequential) but forgets $2\times$ more
($F\approx13$--$14$ vs.\ $7$--$10$); the local system with skips and the controller beats it
at every depth tested. The two starvations are not interchangeable, however. On split
CIFAR-100 depth is a net loss that skips only half refund (development-phase numbers: four
hidden layers $1.9/1.6\to3.8/3.0$ with skips, against $5.7/6.0$ at two), whereas \emph{width}
at the short chain length lifts the local system to $5.8\pm0.9/7.7\pm1.1$ ($1024$--$512$).
The thin-signal regime wants capacity, not depth, and BP+ER's remaining lead there
($10.1\pm0.4$) comes with twice the forgetting.

\paragraph{What the controller must never see, and what it does not do.}
Composition follows one rule: the drive estimate must be fed only \emph{unamplified waking
updates}. Inflated one-hot targets and the offline night's $3\times$-gain batches both push
the EMA up until $\lambda_\ell$ hits its clip and erases day learning (in development, night
$\times$ controller collapsed to $60.2\pm2.8$ against $90.9\pm3.2$ for the night alone),
whereas mechanisms that respect this rule compose cleanly (anchor and bout gate under the
controller kept their usual signature in development). The controller also deliberately does
not take over $\lambda$'s second role. On small tabular problems the decay serves as a
\emph{capacity} regulariser against overfitting (cf.\ Appendix~\ref{app:substrate}),
information that no drive ratio contains; where generalisation rather than alignment sets
the correct decay, a tuned constant should keep the job.

\section{Discussion}
\label{sec:discussion}

\paragraph{Why micro-batches work.} A replay micro-batch is a high-variance gradient
estimate. Applied to a static coalition it perturbs the very units encoding the wake stream,
and with shared optimiser state the perturbation compounds (the rotation-free unmasked curve
of Figure~\ref{fig:batch}). Two things defuse it. Alternating the coalition, which is what
the rotation does, is where most of the accuracy comes from: with it, even unmasked replay
holds $91.5$ at batch $16$ and $89.3$ at batch $4$. Confining the update to the mask adds
$0.1$--$0.3$ points at those batch sizes, shows none of the severe failures that unmasked
replay shows in half the seeds at batch $2$ (Appendix~\ref{app:isomargin}), and gives the
guarantee that the present input's hidden computation, under its suppression mask, is
untouched: exactly on the proven channels, and for all but $\approx0.3\%$ of waking samples
per step in measurement (Propositions~\ref{prop:pre}--\ref{prop:post} and the Remark). The
offline night has neither protection: with nothing clamping the network, a small noisy batch
at $3\times$ learning rate moves the whole weight state.

\paragraph{Prerequisites beyond this substrate.}
The construction is a recipe with two checkable prerequisites. The representation must be
sparse enough that a batch leaves most hidden units silent, and the replay update must be
maskable synapse by synapse. Nothing in Propositions~\ref{prop:pre}--\ref{prop:post}
depends on how the update is computed; we studied a local learner because its sparse codes
and local rules make both prerequisites hold by construction, and Section~\ref{sec:bpk}
shows the same schedule on a backpropagation learner with \kwta{} layers: rotation
transfers, isolation costs nothing on top of it, and isolation alone does not transfer.
Under these prerequisites, consolidation can run in the silent
degrees of freedom between the
steps of the stream, with no separate offline phase, under a conditional guarantee on the
computation in use, and at a higher total compute ($6\times$ the night's wall-clock,
unoptimised). Top-$k$ routed mixture-of-experts layers expose, per token, the experts a
token leaves silent, but whether the routes of a whole batch leave enough experts idle is a
measurement (load balancing pushes against it).

\paragraph{Correspondence with biology.}
We treat the biology as a motivating analogy, not as a claim of mechanistic equivalence. The
refractory rule mirrors the causal finding of \citet{driessen2026} that ON/OFF
\emph{alternation}, not tonic reduction, discharges local sleep pressure and rescues memory:
our soft-rotation control (a tonic reduction) is reliably worse, and the rotation-free
silent mask, with no imposed off-periods at all, is worse still. The dissociation of
triggers, homeostatic for waking consolidation and novelty-timed for full nights, is a
testable prediction for biology. The regime dependence of the rotation price (a cost on easy
streams, a gain under chronic underfitting) may bear on why local sleep in vivo both impairs
behaviour \citep{vyazovskiy2011} and serves useful functions when appropriately timed.

\paragraph{Limitations.}
Configurations were selected on the official test splits (the development set) and frozen;
every main comparison re-trains them with a held-out tenth of the training data removed from
the stream and evaluates on it (two splits for the headline rows, sharing $9\%$ of their
samples), and exploratory results not re-run under this protocol are labelled
development-phase. Development-set numbers sit $1.1$ points above the held-out ones on
average and rank-correlate with them at Spearman $\rho=0.97$, a robustness check rather
than a proof that selection bias is absent (Appendix~\ref{app:val}). The protocol is five
epochs per task; a single pass keeps the lead (Table~\ref{tab:ablation},
Appendix~\ref{app:probes}). All results are at MLP scale on MNIST/CIFAR variants, with six
seeds on the component table, the buffer-size axis and the CIFAR-10 comparison and three
elsewhere. The backpropagation references are external references rather than a
state-of-the-art claim. DER++ ties the system on split-MNIST and trails it only in the
single pass; on raw CIFAR-10 ER-ACE and DER++ lead it by two to three points, and BP+ER
keeps a $0.5$-point lead on feature-based CIFAR with both sides at their best, a lead the
substrate decomposition attributes to dense activation rather than to backpropagation
itself. The comparison with interleaved replay is therefore not a claim of a new accuracy
level; the contribution is consolidation with the live computation held invariant, at no
loss on split-MNIST against the strongest such reference and with the largest margins at
small buffers and in a single pass. On the backprop learner the isolation guarantee is free
only together with the rotation; isolation alone costs seven points there
(Section~\ref{sec:bpk}). Rotation keeps a $1$--$2$-point static price at the
$10\%$ fraction that no gate we
tested removes without a sequential cost (Section~\ref{sec:ablation}). The same-substrate
offline night is highly variable across seeds ($89.0\pm3.5$ on the reported split,
$91.5\pm0.5$ on the second, where it trails the full system by only $0.3$), which we report
but do not explain. Finally, the novelty claims rest on our own literature search (August
2026); the individual primitives (refractory competition, sparse or null-space isolation,
replay, adaptive scheduling) all have ancestors, and the claimed contribution is their
combination into concurrent, exactly isolated consolidation.

Code for the experiments is at \url{https://github.com/Hanny658/IaC-Replay-in-SDF}.

\section{Future Work}
\label{sec:future}

(i) The bimodal instability of the same-substrate offline night at the $10\%$ fraction
($89.0\pm3.5$), and what makes offline consolidation fragile on wide substrates. (ii) The
bout gate's residual static price ($94.2$ vs.\ $96.0$ rotation-free, development-phase $5\%$
numbers): the OFF-side refractory, utility exemption and synaptic anchoring are all ruled
out as routes past it (Section~\ref{sec:ablation}), leaving mastery-annealed rotation depth
untried. (iii) The narrow-vs-wide substrate interaction of offline nights at tiny buffers.
(iv) Convolutional stacks. Depth and residual-style routing are answered
(Section~\ref{sec:controller}: five hidden layers at the two-layer level with locally
learned skips and the controller; on 100-class streams width, not depth, is what pays), but
convolution and a learned front-end are not, and the static-axis wall of the 100-class
regime ($6$--$8\%$ under every substrate change) indicates that the front-end is the
binding constraint there. (v) A throughput theory of the isolated channel that predicts the
required replay cadence from $\rho_\ell$, code overlap and the interference rate. (vi) The
interaction between rotation and spike calibration, which was optimiser-dependent in
development and small on the held-out split. (vii) A drive estimate that survives amplified
update streams (inflated targets, night gains), which currently defeat the controller's
clip.

\section{Conclusion}

In the evaluated class-incremental MLP settings, consolidation does not require a separate
offline phase. The invariance the isolation provides does not protect a finished forward
pass; it closes the channel through which consolidation would perturb the units the present
stream is using. Its accuracy value is $2.6$ points without rotation and $0.1$--$0.3$ with
it (Table~\ref{tab:ablation}). Once the coalition alternates, the live units tolerate
unmasked replay down to four-sample micro-batches; what isolation adds is the guarantee that
they need not, and, at two-sample batches where that tolerance runs out, graceful
degradation where the unmasked control failed severely in half the seeds. With isolated
replay, refractory rotation of the active coalition, homeostatic burst triggering and a
relative-novelty gate on the rotation itself, a biologically constrained learner
consolidates between the steps of its stream. It is competitive with the tested
offline-night schedules on both held-out splits (above the best night by $2.6$ on the first
and by $0.3$ on the second), tied with DER++ and consistently above BP+ER, ER-ACE, A-GEM
and unmasked local replay, at $2.2\times$ the night's replay samples (and, on the first
split, at $1.1\times$); it leads every reference at a buffer of $200$ samples and in a
single pass over the stream, pays one to two points on i.i.d.\ streams, and runs at a
twentieth of dense-rate inference energy when spiking. The mechanism is not tied to the
local rule: on a backprop learner with the same sparse layers the rotation transfers, most
of all in a single pass, and the isolation guarantee again costs nothing on top of it. Where
it falls short, at large
buffers and on raw CIFAR-10 against the cross-entropy backpropagation references, the
shortfall is the local learner's, not the mechanism's: the offline night on the same
learner is further behind still, and removing the offline phase costs nothing anywhere we
measured.

\clearpage
\appendix

\section{One training step, the three channels and the skip path}
\label{app:alg}
Algorithm~\ref{alg:step} lists one waking step of the default system with the state each
quantity is read from. Two conventions matter for the guarantees. First, the awake sets are
read from the waking forward pass, i.e.\ from the weights before the unmasked waking update,
while the masked replay step acts on the updated weights;
Propositions~\ref{prop:pre}--\ref{prop:post} hold verbatim for a mask computed on the weights
being updated, and the drift reported in Section~\ref{sec:isolation} is measured under the
implemented order. Second, the replay micro-batch is inferred with the suppression removed,
so a refractory unit may fire on replay and receive the update (Corollary~\ref{cor:refr}
makes that update invisible to the waking batch for any magnitude), and a burst of $n_b$
micro-batches reuses the mask of its waking batch. The waking step is $\eta_w=\eta\,m/256$
for a waking batch of $m$ samples (the per-epoch drift of a batch-$256$ protocol), the replay
step $\eta_r=3\eta$ per micro-batch, with $\eta=0.02$ throughout.

\begin{algorithm}[h]
\caption{One waking step of the default system (rotation always on, one burst per batch).}
\label{alg:step}
\begin{algorithmic}[1]
\REQUIRE weights $W_t$, velocities $v_t$, suppression $s(t)$, buffer $\mathcal{M}$, pressures $S$
\STATE $a\leftarrow\mathrm{forward}(X_t;\,W_t,\,s(t))$,\ \ $\varepsilon\leftarrow\mathrm{sweep}(a,y_t)$ \hfill Eqs.~\eqref{eq:forward}--\eqref{eq:sweep}
\STATE $(W,v)\leftarrow$ unmasked heavy-ball step on $(W_t,v_t)$ with $G^\ell=\varepsilon^\ell(a^{\ell-1})^{\!\top}$, rate $\eta_w$ \hfill wake update
\STATE $\mathcal{A}^\ell\leftarrow\{i:\exists b,\ a^\ell_{i,b}\neq0\}$ for every hidden layer;\ \ $S_i\leftarrow S_i+\bar a_i(X_t)$ \hfill from the pre-update $a$
\STATE $M^\ell\leftarrow$ Eq.~\eqref{eq:mask}, unit mask $\mathbf{1}[i\notin\mathcal{A}^\ell]$, readout row and bias unmasked
\STATE $s^\ell_i(t{+}1)\leftarrow\mathbf{1}[i\in\mathcal{A}^\ell]$ \hfill refractory rotation
\STATE write $(X_t,y_t)$ into $\mathcal{M}$ (reservoir)
\FOR{each of the $n_b$ replay micro-batches $(X_r,y_r)\sim\mathcal{M}$}
\STATE $a_r\leftarrow\mathrm{forward}(X_r;\,W,\,s{=}0)$,\ \ $\varepsilon_r\leftarrow\mathrm{sweep}(a_r,y_r)$ \hfill suppression removed
\STATE $(W,v)\leftarrow$ masked step, Eq.~\eqref{eq:maskedstep}, with $G^\ell=\varepsilon^\ell_r(a^{\ell-1}_r)^{\!\top}$, rate $\eta_r$, masks $M^\ell$
\ENDFOR
\STATE $S_i\leftarrow0$ for every $i\notin\mathcal{A}^\ell$ if a burst ran \hfill discharge
\RETURN $W_{t+1}=W$,\ $v_{t+1}=v$,\ $s(t{+}1)$
\end{algorithmic}
\end{algorithm}

\paragraph{The three channels.} Table~\ref{tab:channels} lists the three ways a synapse
enters the mask \eqref{eq:mask} and what each guarantees; the rotation of
Section~\ref{sec:rotation} exists to move synapses from the second row into the third.

\begin{table}[h]
\centering\small
\begin{tabular}{llll}
\toprule
Channel & Condition & Invisible to $X$ & Source \\
\midrule
presynaptic silent & $j\notin\mathcal{A}^{\ell-1}(X)$ & for any magnitude & Prop.~\ref{prop:pre} \\
postsynaptic naturally asleep, active pre & $i\notin\mathcal{A}^{\ell}(X)$, $s^\ell_i=0$ & under the margin $\tau^\ell_{k,b}$ & Prop.~\ref{prop:post} \\
postsynaptic suppressed & $s^\ell_i=1$ & for any magnitude & Cor.~\ref{cor:refr} \\
\bottomrule
\end{tabular}
\caption{\textbf{The three consolidation channels} opened by the synapse mask.}
\label{tab:channels}
\end{table}

\paragraph{A numerical example.} On one sample with $k=2$ and $\sigma(z)=(3,2,0.5,0)$, units
$3$ and $4$ are asleep and $\tau_k=2$. Any replay update with $\sigma(z_3+\delta_3)<2$ and
$\sigma(z_4+\delta_4)<2$ leaves the activity vector $(3,2,0,0)$ unchanged, because the two
winners' own inputs are not touched; if unit $4$ is refractory ($s_4=1$) its bound is void and
$\delta_4$ may be arbitrarily large; and every synapse leaving unit $4$ into the next layer may
move by any amount, since its presynaptic activity is zero.

\paragraph{Skip synapses.} For $\ell\ge3$ each deep hidden layer receives a bypass,
$z^\ell=W^\ell a^{\ell-1}+S^\ell a^{\ell-2}+b^\ell$, with $S^\ell$ under the same Dale
projection and a feedback twin $B_S^{\ell}\in\mathbb{R}^{n_{\ell+2}\times n_\ell}$ that carries the
bypass target's burst back to its source,
\[
\varepsilon^{\ell}\leftarrow\varepsilon^{\ell}+\sigma'(z^{\ell})\odot\Big((B_S^{\ell})^{\!\top}\big[\kappa\tanh(\varepsilon^{\ell+2}/\kappa)\odot\mathbf{1}(a^{\ell+2}>0)\big]\Big),
\]
$S^\ell$ and $B_S^{\ell-2}$ learned by the local product of \eqref{eq:kp} with the target
layer's decay $\lambda_\ell$, and replay-masked by the same rule two layers apart,
$M^{\ell}_{S,ij}=\mathbf{1}[\,j\notin\mathcal{A}^{\ell-2}(X)\ \lor\ i\notin\mathcal{A}^{\ell}(X)\,]$,
so that Propositions~\ref{prop:pre}--\ref{prop:post} extend unchanged.

\section{The price of the substrate}
\label{app:substrate}

Table~\ref{tab:substrate} traces the path from a dense backpropagation MLP to the substrate
used throughout, on i.i.d.\ MNIST where each constraint's cost is cleanest. The upper block is
cumulative; the lower block replaces one ingredient at its stage and is to be compared with
the matching cumulative row.

\begin{table}[t]
\centering
\caption{\textbf{Substrate ablation} on i.i.d.\ MNIST ($256$--$128$ hidden core, three seeds;
development phase, official test set).
``Syn.\ ops'' is the fraction of the dense MLP's synaptic operations in one forward pass.
Upper block: constraints added cumulatively; lower block: one ingredient replaced at its stage.}
\label{tab:substrate}
\small
\begin{tabular}{@{}lcc@{}}
\toprule
Substrate & Accuracy & Syn.\ ops \\
\midrule
dense backpropagation MLP (autograd reference) & $97.7\pm0.1$ & 1.00 \\
local burst-error learner, unconstrained & $97.8\pm0.1$ & 0.85 \\
$+$ bounded error, Dale's law, sign-concordant feedback (no transport) & $97.1\pm0.0$ & 0.85 \\
$+$ \kwta{} $10\%$ & $97.0\pm0.2$ & 0.80 \\
$+$ relaxation $T{=}5$ & $97.0\pm0.1$ & 0.80 \\
$+$ $30\%$ distance-dependent connectivity & $96.5\pm0.2$ & 0.24 \\
$+$ burst gate and burst baseline & $96.3\pm0.2$ & 0.24 \\
$+$ single-phase burst sweep replacing relaxation (\emph{final substrate}) & $96.5\pm0.1$ & 0.24 \\
\midrule
random wiring instead of distance-dependent, equal density & $95.4\pm0.0$ & 0.24 \\
one settling step ($T{=}1$) instead of five & $17.9\pm6.8$ & 0.80 \\
weight transport restored (mirroring, no KP decay) & $96.3\pm0.2$ & 0.24 \\
Kolen--Pollack decay removed & $96.3\pm0.1$ & 0.24 \\
heavy-ball SGD instead of Adam ($\eta=0.02$) & $95.2\pm0.3$ & 0.24 \\
\kwta{} $5\%$ / $2\%$ instead of $10\%$ & $95.9\pm0.1$ / $94.2\pm0.2$ & 0.79/0.78 \\
multiplicative burst coding & $78.8\pm1.9$ & 0.24 \\
training on $T_s{=}8$ spike counts & $80.0\pm0.8$ & 0.24 \\
\bottomrule
\end{tabular}
\end{table}

The cumulative ladder prices the entire biological constraint set at $1.2$ points below the
dense backpropagation reference ($97.7\to96.5$) for $4\times$ fewer synaptic operations per
pass; the local burst-error objective itself costs nothing ($97.8$ unconstrained). No single
constraint is expensive: the bounded-error/Dale/no-transport trio costs $0.7$, $10\%$
\kwta{} one tenth of a point, and the $70\%$ connectivity cut half a point, of which
distance-dependent wiring recovers $+1.1$ over random wiring at equal density. The lower
block adds three cautions. Settling cannot merely be shortened ($T{=}1$ collapses to
$17.9$); the single-phase burst sweep is the principled replacement, matching the $T{=}5$
accuracy in a single pass. The Kolen--Pollack decay and even fully restored weight transport
are accuracy-neutral at this scale; the decay is useful as a regulariser on harder inputs,
not as an aligner. And the two spiking-flavoured training schemes fail outright: spiking is
affordable at \emph{inference} (the energy measurements above), not as a training-time code.

Finally, the two tables disagree about the optimiser in an instructive way. On the plain
substrate SGD \emph{loses} $1.3$ points to Adam ($95.2$ vs.\ $96.5$), yet inside the
isolated-consolidation loop the two tie (Table~\ref{tab:ablation}). Per-synapse adaptive
state is an asset while every synapse trains on every batch, and no longer one once updates
become intermittently mask-confined. The optimiser is thus not a neutral implementation
detail: its advantage on the static substrate does not carry into the consolidating system,
where the least state suffices.

\section{The per-batch rotation gates under masked Adam}
\label{app:gates}
Appendices~\ref{app:gates}--\ref{app:probes} report development-phase numbers (official
test set) from the earlier $5\%$ phases, the small-buffer and down-selection studies and the
learning-rate sweeps; they were not re-run under the held-out protocol and support mechanism orderings only, except
where a paragraph is marked held-out (the single-pass streams of Appendix~\ref{app:probes}).

Figure~\ref{fig:rotation}: always-on rotation costs $1.3$--$3.0$ points on i.i.d.\ MNIST
($95.0\to92.0$--$93.7$). Gating rotation by \emph{unit-level pressure} only trades the axes
(static $91.6\to94.0$ as sequential $88.8\to80.7$; no dominant point). The \emph{relative
novelty} gate \eqref{eq:nov} removes the price at no sequential cost on MNIST: static
$\mathbf{95.0\pm0.4}$ (rotation on $1.4$--$17\%$ of batches), sequential $90.3\pm0.4$
($\beta{=}1.5$) or $90.9\pm0.6$ ($\beta{=}1.1$). On split CIFAR-10 the price \emph{reverses}:
rotation gains $+2.5$ (refractory) to $+6.5$ points (pressure-selected rest) on the static
axis, because forced rotation acts as a regulariser in the underfit regime, while the pure
novelty gate \eqref{eq:nov} \emph{misfires} there (sequential $19.1\pm1.1$: fast/slow $\to1$
on a stream that never becomes predictable, so rotation shuts off exactly where it helps).
The gate with the unmastered clause \eqref{eq:prog} resolves this with one setting across
all four regimes: split CIFAR-10 sequential $27.2\pm0.7$ and static $25.3\pm1.0$ (both $=$
always-on rotation), split-MNIST $90.4\pm0.9$, i.i.d.\ MNIST $94.3\pm0.2$ at $\gamma{=}0.5$.
It retains a residual $0.7$-point static cost relative to the pure novelty gate on easy
streams, the price of the unmastered clause. All gate numbers in this appendix use masked
Adam; under heavy-ball SGD the gate must be committed to minimum-duration bouts
(Section~\ref{sec:ablation}).

\begin{figure}[t]
\centering
\includegraphics[width=\linewidth]{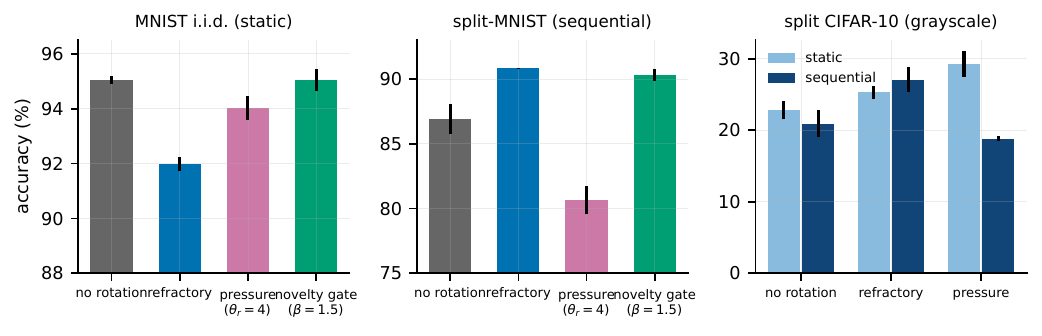}
\caption{\textbf{Rotation: price, trade-off, and gate.} Left: on i.i.d.\ MNIST the novelty gate
restores no-rotation accuracy. Middle: on split-MNIST rotation is required and the gate keeps it.
Right: on split CIFAR-10 (grayscale) rotation is a gain on \emph{both} axes.}
\label{fig:rotation}
\end{figure}

\section{Small buffers: what the night still buys}
\label{app:k200}

At $K{=}200$ the night at first appeared to keep a $\sim$2.6-point edge; most of it turns
out to be a substrate mismatch (Figure~\ref{fig:k200}). On the \emph{same}
$512$--$256$/$5\%$ substrate, the night reaches $84.0\pm3.4$ while noisy pressure-burst
local sleep reaches $84.3\pm1.0$ and local$+$night $84.8\pm0.9$: parity, with three-fold
smaller variance for the wake-side scheme. The strongest tiny-buffer number remains the
narrower $256$--$128$/$10\%$ night ($86.9$), so the offline window retains an advantage only
in combination with narrow substrates. Two mechanism results hold regardless. Replay
diversity (noise $\sigma{=}0.2$) is specifically a wake-side lever: it lifts every local
scheme ($81.8\pm2.9\to84.3\pm1.0$) and \emph{hurts} the night ($84.0\to81.8\pm3.2$). And
\emph{unmasked}, rotation-free waking replay collapses ($71.0\pm1.7$), so isolated rotating
replay beats it at every buffer size we tested. On the held-out split at the $10\%$
fraction (Section~\ref{sec:r1}, six seeds) the picture at $K{=}200$ is parity with the
narrow night ($83.0\pm1.4$ against $82.7\pm2.9$) and a clear lead over DER++ ($77.1$) and
BP+ER ($68.5$), so the night's small-buffer edge in this appendix belongs to the $5\%$
development phase.

\begin{figure}[t]
\centering
\includegraphics[width=.6\linewidth]{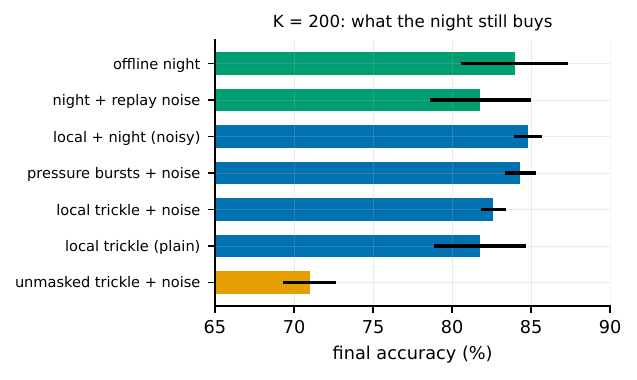}
\caption{\textbf{$K=200$ mechanism triangulation.} Isolated rotating replay beats the unmasked trickle, diversity helps only during wake, and the irreducible advantage of the night at tiny buffers is its interference-free window.}
\label{fig:k200}
\end{figure}

\section{Sleep-anchored down-selection}
\label{app:downsel}

Motivated by the synaptic homeostasis hypothesis \citep{tononi2014}, we anchored structural
plasticity (prune the weakest fraction of live synapses, regrow the same number with a distance
bias) to consolidation windows, using the Kolen--Pollack decay of \eqref{eq:kp} as the natural
weakening force on unreplayed synapses. The outcome splits three ways at matched turnover
($\approx5\%$ per epoch). On the \emph{continuous} local-sleep substrate pruning is free
(sleep-anchored $90.9\pm0.2$, clocked $90.6\pm0.4$, none $90.8\pm0.1$). On the pure
\emph{night} substrate, pruning immediately after each night helps substantially
($87.2\pm3.8$ vs.\ $84.1\pm1.3$): offline down-selection rescues the wide-sparse night.
On the \emph{episodic-burst} substrate pruning is harmful ($84.6\pm0.5$ burst-anchored,
$86.6\pm1.3$ clocked, vs.\ $89.9\pm0.1$ without): when replay is rare, synapses are cut before
enough consolidation events have protected them. Down-selection, like replay itself, needs
either density or an offline window.

\section{Learning-rate sensitivity}
\label{app:eta}

To ensure that the optimiser comparison is not an artefact of tuning SGD's $\eta$ while
leaving Adam's at its default, both were swept. At the $10\%$ fraction (three seeds),
heavy-ball SGD under always-on rotation holds $92.2\pm0.7/93.3\pm0.2$ at $\eta{=}0.01$,
$92.7\pm0.3/94.7\pm0.3$ at $0.02$ (the joint optimum) and $91.5\pm0.6/94.7\pm0.3$ at $0.05$,
decaying at $0.005$ ($91.3/91.3$) and destabilising at $0.1$ ($79.5\pm17.3$), while masked
Adam has its optimum at the default ($91.8\pm0.6/94.6\pm0.2$ at $10^{-3}$; $89.9/89.9$ at
$3{\cdot}10^{-4}$, $90.9/94.3$ at $3{\cdot}10^{-3}$). The comparison of
Table~\ref{tab:ablation} is therefore at each optimiser's best rate. The sweep at the $5\%$
fraction tells the same story. There, under always-on rotation, SGD holds a sequential
plateau over $\eta\in[0.01,0.02]$ ($91.9\pm0.5$/$92.0\pm0.6$) and a static plateau over
$[0.02,0.05]$ ($93.3$--$93.7$), decaying gracefully outside ($86.3$/$89.2$ at $\eta{=}0.1$),
with $\eta{=}0.02$ the joint optimum. Masked Adam, swept over a tenfold range, has its
sequential optimum exactly at the default ($90.8\pm0.1$ at $10^{-3}$; $88.6$ at
$3{\cdot}10^{-4}$, $88.9$ at $3{\cdot}10^{-3}$), and no Adam setting reaches the heavy-ball
joint point ($3{\cdot}10^{-3}$ buys $93.6$ static at a three-point sequential cost).
Unmasked SGD is poor and unstable at \emph{every} rate ($84$--$88$, seed deviations
$1.8$--$4.1$, three- to eight-fold those of the masked runs), so the stabilising role of
isolation plus rotation (the unmasked runs are rotation-free) holds across $\eta$. The
bout-committed gate inherits the plateau, and the optimum shifts one notch down on CIFAR
($\eta{=}0.01$).

\section{Probes of the residual static price}
\label{app:probes}

\paragraph{Single-pass streams.} With one epoch per task instead of five (everything else
as in the headline cells, six seeds, held-out split), isolated replay with rotation reaches
$91.8\pm0.3$ ($F{=}5.5$) against $91.6$ with five epochs, DER++ $90.1\pm0.7$, BP+ER
$88.6\pm0.2$, ER-ACE $88.5\pm0.3$, the offline night $76.9\pm3.4$ (one night per task is too
few), unmasked interleaved replay $83.0\pm14.9$ and no buffer $18.1$: the proposed system
keeps its lead over every tested control in a single pass, while the offline night falls
below BP+ER. The single-pass column of Table~\ref{tab:ablation} gives every ablation row
under this protocol; on the second split the full system reaches $91.8\pm0.1$ and DER++
$89.4\pm0.3$.

\paragraph{Bout length and the OFF-side refractory.}
Bout length is a genuine control parameter: $M{=}512$ (duties $0.61/0.38$) gives
$91.7\pm0.4/94.0\pm0.5$, already recovering most of what the per-batch gate loses, while
$M{=}4096$ drives the static duty to $0.94$ and the refund vanishes ($93.4\pm0.1$). The
static ceiling of silent isolation ($96.0$) stays out of reach, because rotation keeps a
residual price even in bouts; the committed gate buys regime-adaptivity, not a static gain.
Completing the flip-flop with an OFF-side refractory (once a bout expires, triggers are
ignored for $M_{\mathrm{off}}$ batches) is a clean null result. Short refractories
($M_{\mathrm{off}}{\le}1024$) never engage on the static stream, since a bout expires only
after its trigger has been quiet for $M$ batches and static re-triggering is therefore not
flicker to begin with; a long one ($M_{\mathrm{off}}{=}4096$) lowers the static duty to
$0.53$ without moving static accuracy ($94.0\pm0.7$) and monotonically erodes the
sequential axis ($92.1\to91.0$, occasionally masking a task switch). Together with the
$M{=}512$ cell (duty $0.38$, static $94.0$), this locates the residual static price in the
rotation of settled coalitions itself, not in how often the gate fires.

\paragraph{Structural exemption and synaptic anchoring.}
\emph{Utility-exempt rotation} spares the top-$q$ units by long-term use trace from ever
being benched, so the settled coalitions stay awake. It traces a smooth trade-off between
always-on rotation and the rotation-free system ($q{=}0.10$: $90.9\pm0.1/94.2\pm0.2$;
$q{=}0.25$: $89.1\pm0.4/95.4\pm0.2$), but the bout gate beats the $q{=}0.10$ point by $+1.1$
sequential at equal static. \emph{Temporal commitment dominates structural exemption},
which repeats at the utility level what unit-level pressure gating showed earlier
(Figure~\ref{fig:rotation}): any structural narrowing of the rotation pool is paid for in
replay-channel width.

\emph{Two-timescale synapses} give every weight a slow anchor in the spirit of synaptic
consolidation cascades \citep{benna2016}: the fast weight decays toward its anchor at rate
$\lambda$ while the anchor absorbs it at rate $\mu$, ticking only on waking updates so the
exactness of the replay path is untouched. The strong hypothesis, that settled function
living in the anchor would dissolve rotation's static price, is falsified: across
$\lambda\in[3{\cdot}10^{-4},3{\cdot}10^{-3}]$, $\mu\in[10^{-4},10^{-3}]$ no cell exceeds
the anchor-free static accuracy, and strong coupling hurts both axes (the anchor becomes a
lagging drag). Weak symmetric coupling ($\lambda{=}\mu{=}3{\cdot}10^{-4}$), however, acts as
a \emph{sequential stabiliser}: $92.4\pm0.2$, the best sequential cell of the $5\%$ grid,
with the seed variance halved, at static parity, weakly dominating the $5\%$ always-on
point. Anchor and bout gate compose sub-additively
($92.3\pm0.3/93.8\pm0.5$, between the parents).


\section{Isolation across replay batch sizes}
\label{app:isomargin}
Figure~\ref{fig:isomargin} extends the left panel of Figure~\ref{fig:batch} to two-sample
replay micro-batches and shows, per batch size, the seed-paired margin of the full system over
rotation with unmasked replay (held-out split; six seeds at batches $2$, $4$ and $16$, three at
$8$). From $16$ down to $4$ the margin is $0.1$ $[-0.3,+0.5]$, $0.3$ $[-0.2,+0.6]$ and $0.3$
$[-0.5,+1.2]$: isolation costs nothing and buys nothing measurable in mean accuracy while the
rotation keeps the live coalition tolerant of unmasked replay. At batch $2$ the replay
gradient is too noisy for either system. The isolated one falls to $79.0\pm3.2$ (forgetting
$21.5\pm4.2$ against $6.4$ at batch $16$), with every seed between $75.5$ and $84.3$; the
unmasked one bifurcates, with three seeds finishing at $82.5$--$84.8$, slightly \emph{above}
their isolated counterparts ($+0.3$, $+1.8$, $+8.1$), one at $58.2$ and two at chance
($9.9$). Leaking replay onto the live coalition is therefore not a pure loss, since the
surviving seeds show that the extra plasticity can help, but it exposes a severe-failure
mode (final accuracy below $60\%$: two seeds at chance, one at $58.2$) that the isolated
system did not show in six seeds. With the current input's hidden computation held
invariant, the noisiest replay degraded the system gracefully in every seed tested. Six
seeds without a severe failure bound its frequency but do not prove the mode absent, and the
masked system itself destabilises at replay gain $10$ (Section~\ref{sec:ablation}). The
paired mean at batch $2$ ($+24$ $[-0.2,+48.5]$) describes the unmasked failures, not a
margin, and is kept out of the ranges quoted in the main text.

\begin{figure}[h]
\centering
\includegraphics[width=.95\linewidth]{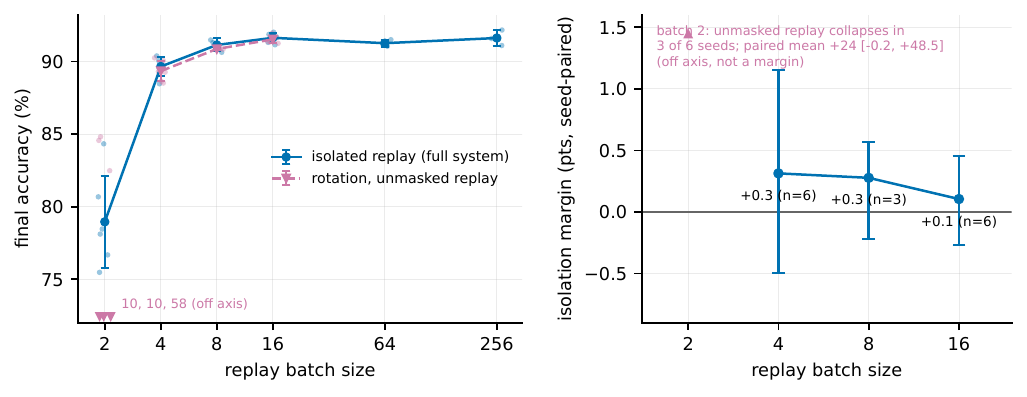}
\caption{\textbf{Isolation across replay batch sizes} (held-out split). \textbf{Left:}
split-MNIST accuracy against replay batch size for isolated replay (the full system) and for
rotation with unmasked replay, every seed shown; at batch $2$ no unmasked mean is drawn
because three of six seeds fail severely (values at the axis floor). \textbf{Right:} the seed-paired
margin of isolation with a 95\% bootstrap interval at the batch sizes where the unmasked
run does not collapse.}
\label{fig:isomargin}
\end{figure}

\section{Development-phase numbers and a second held-out split}
\label{app:val}

Configurations were selected on the official test splits (the development set) and frozen;
the comparisons in the main text re-train them on nine tenths of the training data and
evaluate on the held-out tenth, which took part in no selection. Table~\ref{tab:val} places
the development-set numbers beside the held-out ones and beside a second held-out tenth for
the headline rows (drawn with a different seed; the two tenths share $9\%$ of their samples,
so they are not disjoint test sets). Over the $87$ sequential configurations run under both protocols, reported
numbers sit $1.1$ points below the development numbers on average (a smaller training set
and a distribution-matched evaluation set both contribute) and rank-correlate with them at Spearman $\rho=0.97$ (a robustness check, not a proof that selection
bias is absent); every ordering carried by the paper holds on both held-out splits, with
one magnitude that does not: the same-substrate night, bimodal on the first split
($89.0\pm3.5$), reaches $91.5\pm0.5$ on the second and trails the full system by $0.3$ there.
The adaptive decay's static margin on MNIST is likewise split-dependent ($93.5\pm4.3$ with one
collapsed seed on the first split, $95.2\pm0.4$ on the second). Table~\ref{tab:accmatrix}
gives the per-task accuracy matrix of the headline configuration on the reported split.

\begin{table}[H]
\centering\small
\caption{\textbf{Development-set (official test split) numbers beside the held-out numbers and
a second held-out split.} Sequential accuracy, with the static axis on its own row
where the paper reports it; six seeds on the reported split, three on the development set
and on the second split. Lower blocks: single pass, and raw split CIFAR-10.}
\label{tab:val}
\begin{tabular}{@{}lccc@{}}
\toprule
& development set & held-out split 1 & held-out split 2 \\
\midrule
always-on rotation (default) & $92.7\pm0.3$ & $91.6\pm0.3$ & $91.8\pm0.4$ \\
\quad static & $94.7\pm0.3$ & $94.1\pm0.2$ & $94.1\pm0.6$ \\
bout-committed gate & $92.3\pm0.2$ & $91.8\pm0.3$ & $92.0\pm0.6$ \\
rotation alone (unmasked replay) & $92.1\pm0.5$ & $91.5\pm0.3$ & $91.8\pm0.6$ \\
\quad static & $94.7\pm0.1$ & $93.7\pm0.3$ & $93.7\pm0.3$ \\
silent mask (no rotation) & $88.8\pm0.8$ & $88.0\pm0.7$ & $88.6\pm1.3$ \\
unmasked interleaved ER (SGD) & $87.0\pm1.1$ & $85.4\pm3.5$ & $80.0\pm14.1$ \\
BP $+$ ER & $89.3\pm0.8$ & $88.8\pm0.3$ & $88.5\pm0.7$ \\
offline night, same substrate & $90.9\pm3.2$ & $89.0\pm3.5$ & $91.5\pm0.5$ \\
offline night, narrow substrate & $90.7\pm0.4$ & $88.5\pm2.8$ & $89.3\pm0.6$ \\
adaptive $\lambda$ (drive) & $91.7\pm0.2$ & $90.6\pm0.6$ & $90.8\pm0.1$ \\
\quad static & $96.1\pm0.2$ & $93.5\pm4.3$ & $95.2\pm0.4$ \\
five hidden layers $+$ skips $+$ adaptive $\lambda$ & $91.5\pm0.4$ & $91.0\pm0.3$ & $91.1\pm0.3$ \\
\quad static & $96.2\pm0.3$ & $95.8\pm0.3$ & $95.7\pm0.1$ \\
no buffer & $19.6\pm0.0$ & $19.4\pm0.0$ & $19.5\pm0.0$ \\
random rotation & $90.8\pm1.3$ & $91.1\pm0.7$ & $90.7\pm0.5$ \\
replay confined to the readout & $27.3\pm15.1$ & $41.7\pm10.9$ & $38.1\pm4.9$ \\
mirror isolation & $77.9\pm1.6$ & $79.4\pm3.5$ & $79.2\pm2.4$ \\
soft rotation & $90.2\pm0.4$ & $89.9\pm0.7$ & $89.5\pm0.4$ \\
BP $+$ DER++ & $92.2\pm0.5$ & $91.5\pm0.5$ & $91.3\pm0.3$ \\
BP $+$ ER-ACE & $90.4\pm0.3$ & $90.2\pm0.4$ & $90.4\pm0.2$ \\
BP $+$ A-GEM & $73.6\pm4.2$ & $68.5\pm7.0$ & $70.4\pm4.1$ \\
BP $+$ \kwta{}, interleaved replay (same schedule) & $92.4\pm0.4$ & $91.7\pm0.4$ & --- \\
BP $+$ \kwta{}, isolated replay & $84.3\pm2.5$ & $84.7\pm2.6$ & --- \\
BP $+$ \kwta{}, interleaved replay $+$ rotation & $92.9\pm0.4$ & $92.2\pm0.3$ & --- \\
BP $+$ \kwta{}, isolated replay $+$ rotation & $92.9\pm0.2$ & $92.1\pm0.3$ & --- \\
\midrule
\multicolumn{4}{@{}l}{\emph{single pass (one epoch per task)}} \\
\quad isolated replay $+$ rotation & $91.8\pm0.4$ & $91.8\pm0.3$ & $91.8\pm0.1$ \\
\quad unmasked interleaved replay & $68.3\pm19.1$ & $83.0\pm14.9$ & $83.9\pm6.0$ \\
\quad offline night, same substrate & $75.4\pm0.5$ & $76.9\pm3.4$ & $75.2\pm5.5$ \\
\quad BP $+$ ER & $90.1\pm0.7$ & $88.6\pm0.2$ & $88.2\pm0.9$ \\
\quad BP $+$ DER++ & --- & $90.1\pm0.7$ & $89.4\pm0.3$ \\
\midrule
\multicolumn{4}{@{}l}{\emph{split CIFAR-10 (raw)}} \\
\quad isolated replay $+$ rotation & $29.5\pm0.8$ & $28.4\pm0.8$ & $29.7\pm0.9$ \\
\quad offline night & $26.2\pm0.6$ & $25.1\pm2.7$ & $26.5\pm0.4$ \\
\quad BP $+$ ER & $25.1\pm1.5$ & $24.6\pm0.6$ & $25.8\pm0.8$ \\
\quad BP $+$ DER++ & $31.1\pm0.3$ & $30.7\pm0.8$ & $31.0\pm0.6$ \\
\quad BP $+$ ER-ACE & $31.7\pm0.8$ & $31.6\pm1.0$ & $31.7\pm0.8$ \\
\quad unmasked ER & $17.0\pm6.1$ & $14.6\pm4.6$ & $15.5\pm4.5$ \\
\quad no buffer & $16.4\pm0.2$ & $16.3\pm0.1$ & $16.6\pm0.0$ \\
\bottomrule
\end{tabular}
\end{table}

\begin{table}[H]
\centering\small
\caption{\textbf{Per-task accuracy matrix} of the headline configuration on split-MNIST
(held-out split, mean over six seeds): row $=$ after training task $i$, column $=$ accuracy on
task $j$. Forgetting $F$, the mean of the diagonal minus the last row over tasks $1$--$4$, is
$6.4\pm0.6$; the drop is concentrated on tasks $2$--$3$ and the first task is retained.}
\label{tab:accmatrix}
\begin{tabular}{lccccc}
\toprule
after task & T1 & T2 & T3 & T4 & T5 \\
\midrule
1 & $99.8$ & & & & \\
2 & $98.7$ & $97.2$ & & & \\
3 & $97.7$ & $92.7$ & $97.0$ & & \\
4 & $97.5$ & $89.6$ & $92.9$ & $97.5$ & \\
5 & $96.7$ & $88.1$ & $89.7$ & $91.5$ & $91.8$ \\
\bottomrule
\end{tabular}
\end{table}

\end{document}